\begin{document}

%%%%%%%%% TITLE
\title{Generative Interventions for Causal Learning}

\newcommand*\samethanks[1][\value{footnote}]{\footnotemark[#1]}

\author{Chengzhi Mao \\
Columbia University\\
{\tt\small mcz@cs.columbia.edu}
% For a paper whose authors are all at the same institution,
% omit the following lines up until the closing ``}''.
% Additional authors and addresses can be added with ``\and'',
% just like the second author.
% To save space, use either the email address or home page, not both
\and
Augustine Cha\thanks{Equal Contribution. Order by coin flip.}\\
Columbia University\\
{\tt\small ac4612@columbia.edu}
\and
Amogh Gupta\samethanks \\
Columbia University\\
{\tt\small ag4202@columbia.edu}
% First line of institution2 address\\
\and
Hao Wang\\
Rutgers University\\
{\tt\small hoguewang@gmail.com}
% First line of institution2 address\\
% {\tt\small secondauthor@i2.org}
\and
Junfeng Yang\\
Columbia University\\
{\tt\small junfeng@cs.columbia.edu}
% First line of institution2 address\\
% {\tt\small secondauthor@i2.org}
\and
Carl Vondrick\\
Columbia University\\
{\tt\small vondrick@cs.columbia.edu}
% First line of institution2 address\\
% {\tt\small secondauthor@i2.org}
}

\maketitle

%%%%%%%%% ABSTRACT
\def\Blue{\color{blue}}
\def\Purple{\color{purple}}

\def\A{{\bf A}}
\def\a{{\bf a}}
\def\B{{\bf B}}
\def\b{{\bf b}}
\def\C{{\bf C}}
\def\c{{\bf c}}
\def\D{{\bf D}}
\def\d{{\bf d}}
\def\E{{\bf E}}
\def\e{{\bf e}}
\def\f{{\bf f}}
\def\F{{\bf F}}
\def\K{{\bf K}}
\def\k{{\bf k}}
\def\L{{\bf L}}
\def\H{{\bf H}}
\def\h{{\bf h}}
\def\G{{\bf G}}
\def\g{{\bf g}}
\def\I{{\bf I}}
\def\R{{\bf R}}
\def\X{{\bf X}}
\def\Y{{\bf Y}}
\def\OO{{\bf O}}
\def\oo{{\bf o}}
\def\P{{\bf P}}
\def\Q{{\bf Q}}
\def\r{{\bf r}}
\def\s{{\bf s}}
\def\S{{\bf S}}
\def\t{{\bf t}}
\def\T{{\bf T}}
\def\x{{\bf x}}
\def\y{{\bf y}}
\def\z{{\bf z}}
\def\Z{{\bf Z}}
\def\M{{\bf M}}
\def\m{{\bf m}}
\def\n{{\bf n}}
\def\U{{\bf U}}
\def\u{{\bf u}}
\def\V{{\bf V}}
\def\v{{\bf v}}
\def\W{{\bf W}}
\def\w{{\bf w}}
\def\0{{\bf 0}}
\def\1{{\bf 1}}
\def\N{{\bf N}}

\def\AM{{\mathcal A}}
\def\EM{{\mathcal E}}
\def\FM{{\mathcal F}}
\def\TM{{\mathcal T}}
\def\UM{{\mathcal U}}
\def\XM{{\mathcal X}}
\def\YM{{\mathcal Y}}
\def\NM{{\mathcal N}}
\def\OM{{\mathcal O}}
\def\IM{{\mathcal I}}
\def\GM{{\mathcal G}}
\def\PM{{\mathcal P}}
\def\LM{{\mathcal L}}
\def\MM{{\mathcal M}}
\def\DM{{\mathcal D}}
\def\SM{{\mathcal S}}
\def\RB{{\mathbb R}}
\def\EB{{\mathbb E}}

\def\tx{\tilde{\bf x}}
\def\ty{\tilde{\bf y}}
\def\tz{\tilde{\bf z}}
\def\hd{\hat{d}}
\def\HD{\hat{\bf D}}
\def\hx{\hat{\bf x}}
\def\hR{\hat{R}}

\def\Ome{\mbox{\boldmath$\omega$\unboldmath}}
\def\bet{\mbox{\boldmath$\beta$\unboldmath}}
\def\et{\mbox{\boldmath$\eta$\unboldmath}}
\def\ep{\mbox{\boldmath$\epsilon$\unboldmath}}
\def\ph{\mbox{\boldmath$\phi$\unboldmath}}
\def\Pii{\mbox{\boldmath$\Pi$\unboldmath}}
\def\pii{\mbox{\boldmath$\pi$\unboldmath}}
\def\Ph{\mbox{\boldmath$\Phi$\unboldmath}}
\def\Ps{\mbox{\boldmath$\Psi$\unboldmath}}
\def\pss{\mbox{\boldmath$\psi$\unboldmath}}
\def\tha{\mbox{\boldmath$\theta$\unboldmath}}
\def\Tha{\mbox{\boldmath$\Theta$\unboldmath}}
\def\muu{\mbox{\boldmath$\mu$\unboldmath}}
\def\Si{\mbox{\boldmath$\Sigma$\unboldmath}}
\def\Gam{\mbox{\boldmath$\Gamma$\unboldmath}}
\def\gamm{\mbox{\boldmath$\gamma$\unboldmath}}
\def\Lam{\mbox{\boldmath$\Lambda$\unboldmath}}
\def\De{\mbox{\boldmath$\Delta$\unboldmath}}
\def\vps{\mbox{\boldmath$\varepsilon$\unboldmath}}
\def\Up{\mbox{\boldmath$\Upsilon$\unboldmath}}
\def\Lap{\mbox{\boldmath$\LM$\unboldmath}}
\newcommand{\ti}[1]{\tilde{#1}}

\def\tr{\mathrm{tr}}
\def\etr{\mathrm{etr}}
\def\etal{{\em et al.\/}\,}
\newcommand{\indep}{{\;\bot\!\!\!\!\!\!\bot\;}}
\def\argmax{\mathop{\rm argmax}}
\def\argmin{\mathop{\rm argmin}}
\def\vec{\text{vec}}
\def\cov{\text{cov}}
\def\dg{\text{diag}}

\newtheorem{remark}{Remark}
\newtheorem{theorem}{Theorem}
\newtheorem{lemma}{Lemma}
\newtheorem{definition}{Definition}

\newtheorem{proposition}{Proposition}

\let\oldemptyset\emptyset
\let\emptyset\varnothing

\newcommand\independent{\protect\mathpalette{\protect\independenT}{\perp}}
\def\independenT#1#2{\mathrel{\rlap{$#1#2$}\mkern2mu{#1#2}}}

\begin{abstract}
We introduce a framework for learning robust visual representations that generalize to new viewpoints, backgrounds, and scene contexts. Discriminative models often learn naturally occurring spurious correlations, which cause them to fail on images outside of the training distribution. In this paper, we show that we can steer generative models to manufacture interventions on features caused by confounding factors. Experiments, visualizations, and theoretical results show this method learns robust representations  more consistent with the underlying causal relationships. Our approach improves performance on multiple datasets demanding out-of-distribution generalization, and we demonstrate state-of-the-art performance generalizing from ImageNet to ObjectNet dataset. 
\end{abstract}

% \vspace{-2em}
\section{Introduction}

Visual recognition today is governed by empirical risk minimization (ERM), which bounds the generalization error when the training and testing distributions match \cite{ERM}.  When training sets cover all factors of variation, such as background context or camera viewpoints, discriminative models learn invariances and predict object category labels with the right cause \cite{pearl}.
%This property has fueled substantial progress, leading to high performance across object recognition tasks \cite{He_2016, huang2016densely,aytar2016soundnet}.
However, the visual world is vast and naturally open. Collecting a representative, balanced dataset is difficult and, in some cases, impossible because the world can unpredictably change after learning.

Directly optimizing the empirical risk is prone to learning unstable spurious correlations that do not respect the underlying causal structure~\cite{causalfeature, arjovsky2019invariant,ilse2020designing, causalexplain, interventcausal, CIinvariant}.  Figure \ref{fig:teaser} illustrates the issue succinctly. In natural images, the object of interest and the scene context have confounding factors, creating spurious correlations. 
For example, ladle (the object of interest) often has a hand holding it (the scene context), but there is no causal relation between them. 
Several studies have exposed this challenge by demonstrating substantial performance degradation when the confounding bias no longer holds at testing time \cite{ImageNetOverfit, imagenetbiased}. For example, the ObjectNet \cite{Objectnet} dataset removes several common spurious correlations from the test set, causing the performance of state-of-the-art models to deteriorate by 40\% compared to the ImageNet validation set.
%This fragility raises serious concerns for representation learning in critical applications.
%such as autonomous cars~\cite{Pei_2017}. 

A promising direction for fortifying visual recognition is to learn \emph{causal} representations (see \cite{scholkopf2019causality} for an excellent overview). If representations are able to identify the causal mechanism between the image features and the category labels, then robust generalization is possible.
While the traditional approach to establish causality is through randomized control trials or interventions, natural images are passively collected, preventing the use of such procedures. 

\setcounter{footnote}{-1}
\begin{figure}
    \centering
    \includegraphics[width=\linewidth]{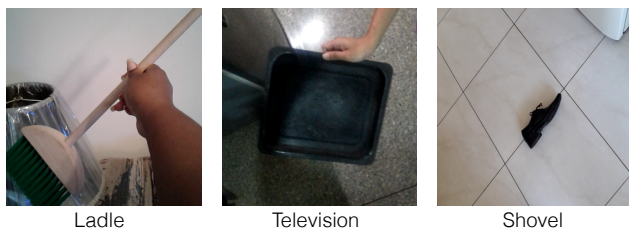}
    \caption[]{Top predictions from a state-of-the-art ImageNet classifier \cite{He_2016}. The model uses spurious correlations (scene contexts, viewpoints, and backgrounds), leading to incorrect predictions.\footnotemark\ In this paper, we introduce a method to learn causal visual features that improve robustness of visual recognition models.}
    \label{fig:teaser}
    \vspace{-5mm}
\end{figure}
\footnotetext{The correct categories are clearly a broom, a tray, and a shoe.}

% \carl{I think you can write this paragraph at a higher-level. Distill the main intuition and trick behind the paper. Your approach is broader than just ImageNet and GANs.}

\begin{figure*}[t]
  \centering
  \vspace{-4mm}
  \includegraphics[width=0.95\textwidth]{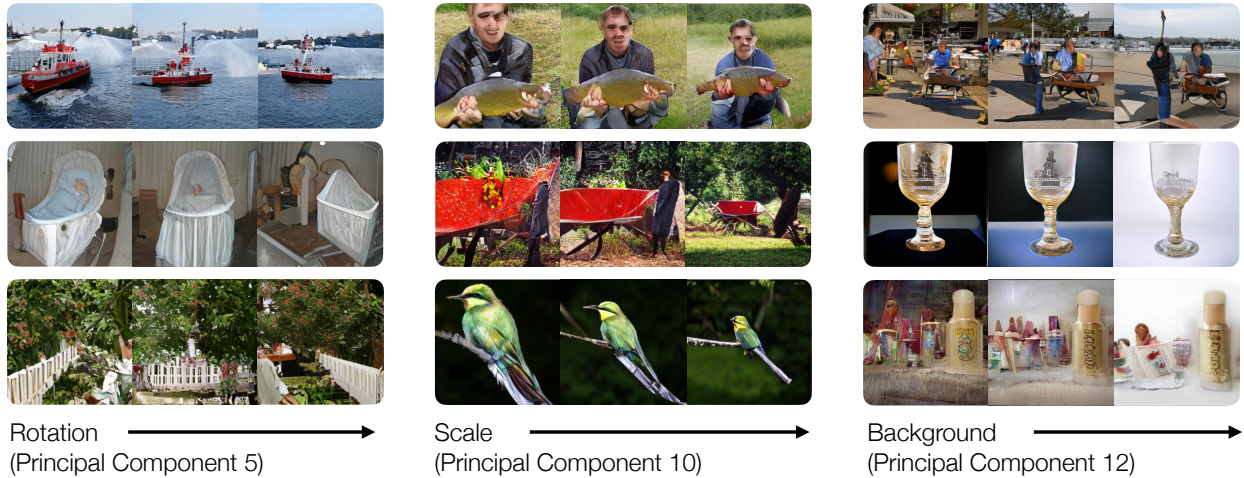}
  \caption{Generative adversarial networks are steerable \cite{hrknen2020ganspace, jahanian2019steerability}, allowing us to manipulate images and construct interventions on nuisances. The transformations transfer across categories. Each column in the figure presents images with one consistent intervention direction.\vspace{-5mm}}
%   \cv{This looks very nice. Can you make the images larger? Also, the spacing is not consistent.}
  \label{fig:equi}
\end{figure*}

This paper introduces a framework for learning causal visual representations with natural images.
Our approach is based on the observation that generative models quantify nuisance variables \cite{hrknen2020ganspace, jahanian2019steerability}, such as viewpoint or background. We present a causal graph that models both robust features and spurious features during image recognition. Crucially, our formulation shows how to learn causal features by steering generative models to perform interventions on realistic images, simulating manipulations to the camera and scene that remove spurious correlations. 
As our approach is model-agnostic, we are able to learn robust representations for any state-of-the-art computer vision model.

Our empirical and theoretical results show that our approach learns representations that regard causal structures.
While just sampling from generative models will replicate the same training set biases, steering the generative models allows us to reduce the bias, which we show is critical for performance. On ImageNet-C \cite{imgnet-C} benchmark, we surpass established methods by up to 12\%, which shows that our method helps discriminate based on the causal features. 
Our approach also demonstrates the state-of-the-art performance on the new ObjectNet dataset \cite{Objectnet}. We obtain $39.3\%$ top-1 accuracy with ResNet152 \cite{He_2016}, which is over 9\% gain over the published ObjectNet benchmark \cite{Objectnet} while maintaining accuracy on ImageNet and ImageNet-V2 \cite{ImageNetOverfit}. Our code is available at \url{https://github.com/cvlab-columbia/GenInt}. 

%  On purely BigGAN \cite{BigGAN} generated data, our approach beats the state-of-the-art method trained on observation GAN \cite{CAS}.

% We validate this approach on the colored MNIST dataset, and show our method discriminates based on the shape (which is causal), and not the color (which is spurious).

% 
%  We then show 

\section{Related Work}

% Similar approaches are leveraged in vision, for example, learning representations by leveraging egomotion and camera motion \cite{ego-motion, learn_bymove}.

\textbf{Data augmentation:} Data augmentation often helps learn robust image classifiers. Most existing data augmentations use lower-level transformations \cite{AlexNet, tian2020rethinking}, such as rotate, contrast, brightness, and shear. Auto-data augmentation \cite{cubuk2018autoaugment, zhang2019adversarialautoaug} uses reinforcement learning to optimize the combination of those lower-level transformations. Other work, such as \emph{cutout}~\cite{maskout} and \emph{mixup}~\cite{zhang2017mixup}, develops new augmentation strategies towards improved generalization.
% Style transfer \cite{imagenetbiased} is also used to learn robust models. 
\cite{perez2017effectivenessaug, cyclegan-aug, imagenetbiased} explored style transfer to augment the training data, however, the transformations for training are limited to texture and color change. Adversarial training, where images are augmented by adding imperceptible adversarial noise, can also train robust models \cite{xie2019adversarial}. However, both adversarial training \cite{xie2019adversarial} and auto augmentation \cite{cubuk2018autoaugment, zhang2019adversarialautoaug} introduce up to three orders of magnitude of computational overhead. In addition, none of the above methods can do high-level transformations such as changing the viewpoint or background \cite{Objectnet}, while our generative interventions can. 
Our method fundamentally differs from prior data augmentation methods because it learns a robust model by estimating the causal effects via generative interventions. Our method not only eliminates spurious correlations more than data augmentations, but also theoretically produces a tighter causal effect bound. 

% In this paper, we investigate a different perspective to learn robust model by estimating .
% Another line of research regularizes models with a large margin \cite{largemargin, khosla2020supervisedconstrastive} and smoothness \cite{StabilityTraining, VAT}, which improves the robustness.

%  For instance, the background snow is spuriously correlated with penguin label, which cannot be changed using previous method. On the other hand, GAN based data augmentation approaches, such as style transfer\cite{perez2017effectivenessaug, cyclegan-aug}, are restricted to texture and color changing, not viewpoint and zoom.

\textbf{Causal Models:} Causal image classifiers generalize well despite environmental changes because they are invariant to the nuisances caused by the confounding factors \cite{arjovsky2019invariant}. A large body of work studies how to acquire causal effects from a combination of association levels and intervention levels \cite{causaladaptation, causalfeature, nair2019causal}. Ideally, we can learn an invariant representation across different environments and nuisances \cite{arjovsky2019invariant, CIinvariant} while maintaining the causal information \cite{aless2017emergence}. While structural risk minimization, such as regularization \cite{janzing2019causalregularization}, can also promote a model's causality, this paper focuses on training models under ERM~\cite{ERM}. 

% Recent work interprets data augmentation as interventions for causal discovery in vision models \cite{ilse2020designing}, which proposes a structural causal model framework to explain the causality gain in computer vision via data augmentation. However, the assumption that data augmentation can remove all back-doors \cite{pearl} is often infeasible for realistic images.

% While the ability to learn fast is one measurement for causal mechanisms \cite{meta-causal}, we measure models' causality by testing on controlled real-world dataset \cite{Objectnet} and robustness on corruptions \cite{imgnet-C}.

% In this paper, we formalize our approach under instrumental variable model \cite{IV}, so that our theorem is also applicable to realistic cases with unobserved confounders. 

\textbf{Generative Models:} 
Our work leverages recent advances in deep generative models \cite{goodfellow2014generative, kingma2013autoencoding, Jaakkola, BigGAN, razavi2019vqvae}. Deep generative models capture the joint distribution of the data, which can complement discriminative models~\cite{Jaakkola, radford2015unsupervised}. Prior work has explored adding data directly sampled from a deep generator to the original training data to improve classification accuracy on ImageNet \cite{CAS}. We denote it as GAN Augmentation in this paper.
Other works improved classification accuracy under imbalanced and insufficient data by oversampling through a deep generator \cite{GANaug, GanLiver, lowshotgan, antoniou2017gandataaug}. However, sampling without intervention, the augmented data still follows the same training joint distribution, where unobserved confounding bias will continue to contaminate the generated data. Thus, the resulting models still fail to generalize once the spurious correlations changed. Ideally, we want to generate data independent of the spurious correlations while holding the object's causal features fixed.

% , and the models require supervision under each style specification

Recent works analyzing deep generative models show that different variations, such as viewpoints and background, are automatically learned \cite{jahanian2019steerability, hrknen2020ganspace}. We leverage deep generative models for constructing interventions in realistic visual data. Our work randomizes a large class of steerable variations, which shifts the observed data distribution to be independent of the confounding bias further. 
Our approach tends to manipulate high-level transformations orthogonal to traditional data augmentation strategies \cite{augsurvey, tian2020rethinking, AlexNet}, and we obtain additional performance gains by combining them.
% \textbf{Data Augmentation:}

% \carl{Can you expand a bit more?}
%  The key for domain adaptation is to match the distribution of the source and target domain \cite{}.  Our paper trains only on ImageNet, with no access to the target test distribution, even unlabeled data\. This enables us to measure causality with generalization instead of just overfiting to the test set. Domain adaptation methods  are thus not applicable here. 

\textbf{Domain Adaptation:} Our goal is to train robust models that generalize to unforeseen data. Accessing the test data distribution, even unlabeled, could lead to overfitting and fail to measure the true generalization. Our work thus is trained with no access to the test data. Our setting is consistent with ObjectNet's policy prohibiting any form of learning on its test set \cite{Objectnet}, and ImageNet-C's policy discouraging training on the tested corruptions. On the other hand, domain adaptation \cite{Hoffman_cycada2017, aligndomain_2018_CVPR, wang2020continuously} needs access to the distributions of both the source domain and the target domain, which conflicts with our setting.

% Our approach is to make model robust for unforeseen test. Domain ada need acess the target distribution, does not give true generalization.

\section{Causal Analysis}
We quantify nuisances via generative models and propose the corresponding causal graph. We show how to train causal models via intervention on the nuisance factors. We theoretically show sufficient conditions for intervention strategy selection that promote causal learning.

\begin{figure}[t]
  \centering
  \vspace{-5mm}
  \includegraphics[width=0.48\textwidth]{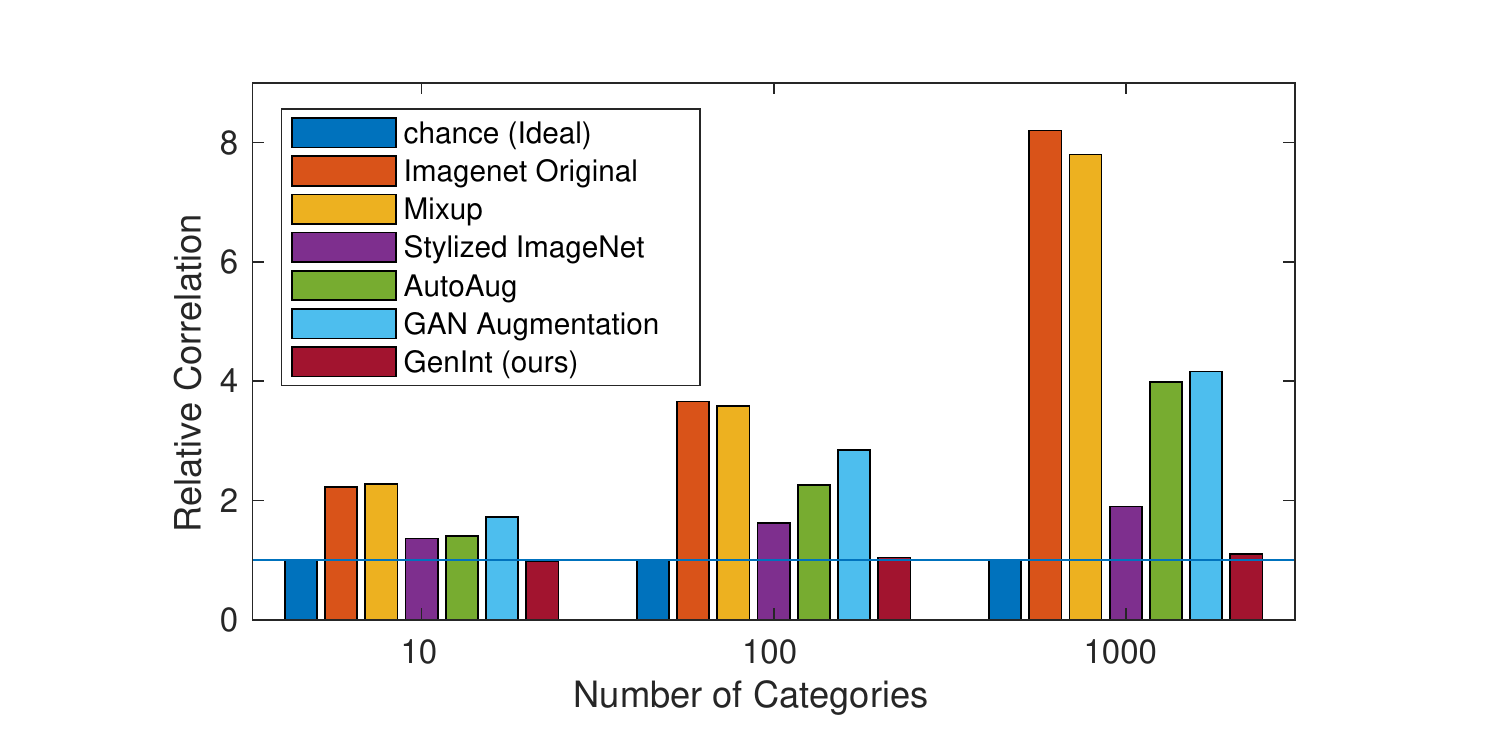}
  \vspace{-5mm}
  \caption{Do unwanted correlations exist between the nuisance factors (e.g. backgrounds, viewpoint) and labels on ImageNet? 
  We measure correlation (y-axis) via how many times the classification accuracy is better than chance on the ImageNet validation set. 
  The x-axis denotes the number of categories we select for prediction. To train causal models, nuisance factors should not be predictable for labels (chance). Our generative interventions (GenInt) reduce the unwanted correlations from the data better than existing data augmentation strategies \cite{zhang2017mixup, cubuk2018autoaugment, imagenetbiased, CAS}.}
   \vspace{-5mm}
  \label{fig:spurious_measure}
\end{figure}

\subsection{Correlation Analysis}\label{sec:corr}
%   The figure (red) shows that we can predict the correct label with only nuisances, which demonstrates the existence of spurious correlations on ImageNet.
  
%   This is the first qualitive results for measuring the lower-bound of ``spurious'' correlations on real-world dataset. Results shows that the original Imagenet dataset contains spurious correlations, our intervention mitigate such spurious correlations in the data distribution. Conditional generative models disentangle the nuisance factors from the categorical features. 

Nuisance factors do not cause the object label. If there is a correlation between the nuisance factors and the label in data, we cannot learn causal classifiers. While identifying such correlations is crucial, they are hard to quantify on large, real-world vision datasets, because nuisance factors such as viewpoint and backgrounds, are difficult and expensive to measure in natural images. 

% for real world images are intractable, as the object category feature and the nuisances are entangled in the image pixels. We propose to approximate the nuisance 
% We first conduct an empirical analysis on the ImageNet dataset, where we identify the spurious correlations between the image nuisances and the category. 

We propose to measure such nuisance factors via intervening on the conditional generative models.
Prior work \cite{hrknen2020ganspace, jahanian2019steerability} shows that nuisance transformations automatically emerge in generative models (Figure \ref{fig:equi}), which enables constructing desired nuisances via intervention. Given a category $\y$ and random noise vector $\h_0$, we first generate an exemplar image $\x = G(\h_0,\y)$. We then conduct intervention $\z$ to get the intervened noise vector $\h_0^*$, and the intervened image $\x^*= G(\h_0^*,\y)$, which corresponds to changing the viewpoints, backgrounds, and scene context of the exemplar. We thus get data with both image $\x^*$ and the corresponding nuisance manipulation $\z$. Implementation details are in the supplementary.  
% Given a category $y$ and noise vector $h_0$ (averaged from several sampling), we first generate an exemplar image. We then conduct intervention $z$ to get an intervened image $x$, which corresponds to changing the viewpoints, backgrounds, and scene context of the exemplar. We thus get data with both image $x$ and the corresponding nuisance manipulation $z$. Implementation details are in the supplementary.  

% Ideally, nuisance factors, such the viewpoint, is not the cause for label $y$, thus a causal model should be not predictive for the correct label $y$, as such correlations is not the right cause.

We train a model that predicts the nuisances $\z$ from input image $\x^*$. This model can then predict nuisances $\z$ from natural images $\x$. We read out the correlation between the nuisance $\z$ and label $\y$ by training a fully-connected classifier with input $\z$ and output $\y$. We measure the correlations via the times the classifier outperforms random. Generative models may capture only a subset of the nuisances, thus our estimated correlations are lower bounds. The true correlations maybe even more significant.

In Figure \ref{fig:spurious_measure}, the training data of five established methods \cite{He_2016, zhang2017mixup, cubuk2018autoaugment, imagenetbiased, CAS} contains strong correlations that are undesirable. On the original ImageNet data, the undesirable correlation in the data is up to 8 times larger than chance. Our generative interventions reduce the unwanted correlations from the data significantly, naturally leading to robust classifiers that use the right cause.

% the existence of such correlations for training data in  and our generative intervention method. Our approach reduces the unwanted correlations while others do not.

\subsection{Causal Graph}

 \begin{figure}[t]
\centering
\vspace{-1mm}
\subfloat{\label{scm2}\includegraphics[width=0.47\textwidth, trim={0cm 0 0cm 0},clip]{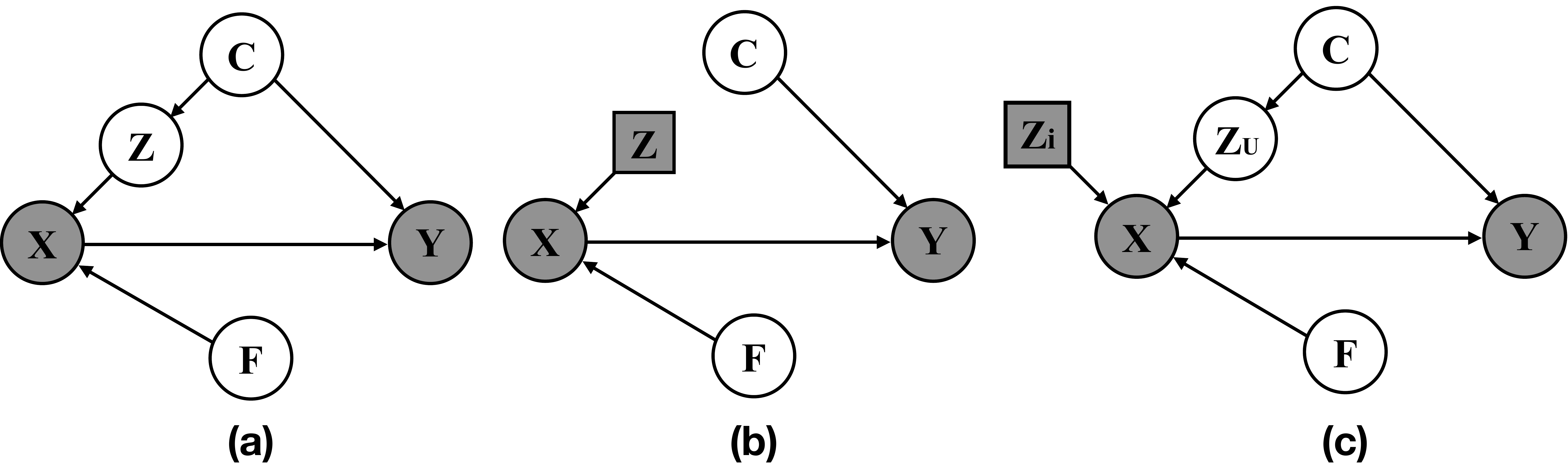}}
% \subfloat{\label{original}\includegraphics[width=0.33\textwidth, trim={0cm 0 0cm 0},clip]{}}
% \subfloat{\label{scm2}\includegraphics[width=0.33\textwidth, trim={0cm 0 0cm 0},clip]{}}
\caption{Causal graph for image classification. Gray variables are observed. (\textbf{a}) $F$ is the variable that generates the object features. The unobserved confounder $C$ causes both the  background features $Z$ and label $Y$, which creates a spurious correlation between the image $X$ and label $Y$. (\textbf{b}) An ideal intervention blocks the backdoor path from $Z$ to $C$, which produces causal models. (\textbf{c}) In practice, we cannot guarantee to intervene on all the $Z$ variables. However, by properly intervening on even a small set of nuisance factors $Z_i$, the confounding bias of the observed distribution is mitigated, which is theoretically proven by Theorem \ref{th:2IV}.}
  \label{fig:causalgraph}
  \vspace{-5mm}
\end{figure}
We build our causal graph based on the correlation analysis. We know that nuisances do not cause the label (context `hand' does not cause the category `ladle'), and there is no additional common outcome variable (collider) in our correlation prediction. If the correlation between the nuisances and the label is not chance, then there exists a confounder $C$ that causes both $Z$ and $Y$.

Figure \ref{fig:causalgraph}(a) shows our causal graph for image recognition. We denote the unobserved confounder as $C$, which produces nuisance factors $Z$, and the corresponding annotated categorical label $Y$. $Z$ produces the nuisance features $X_Z$ in images. There is another variable $F$ that generates the core object features $X_F$, which together with $X_Z$ constructs the pixels of a natural image $X$. There is no direct arrow from $F$ to $Y$ since $Y \independent F | X$, i.e., image $X$ contains all the features for predicting $Y$. We can observe only $X$ but not $X_Z$ or $Z_F$ separately.  We draw a causal arrow from $X$ to $Y$. Since nuisances $Z$ are spuriously correlated to the label but not causing the label $Y$, classifiers are not causal if they predict $Y$ from the nuisances $Z$ better than chance. Note that ``while a directed path is necessary for a total causal effect, it is not sufficient \cite{elements_causal}.'' Thus, though there is a path $Z \rightarrow X \rightarrow Y$, $Z$ does not cause $Y$.

% The core features of object $X_F$ and the nuisances $X_Z$ are entangled in image variable $X$, which is the one we observe.
% TODO Hao: make it clearer

% i.e.,  $P(Y=y_i|Z) > P(Y=y_i|do(Z))$, then there exists backdoor \footnote{. With backdoor, classifiers trained directly on  $X$ and $Y$ will not be causal.
% , we have $P(Y=y_i|do(Z))=\frac{1}{K}$, where $i=1, ..., K$ and $K$ is the total number of categories. I

%  Here, we use the do-operator $do(X=x)$ to denote performing a treatment $x$ on variable $X$ while keeping all the other variables the same \cite{pearl}. 
 
\subsection{Causal Discriminative Model}\label{sec:dis}

\emph{Generative interventions help in eliminating spurious correlations (Figure~\ref{fig:spurious_measure} and Section~\ref{sec:corr}), leading to better generalization.} We denote the causality from $X$ to $Y$ to be $P(\y|do(\x))$, which is the treatment effect of an input image $X$ on label $Y$.
To capture the right cause via correlations learned by empirical risk minimization, we need to construct data such that $P(Y|do(X)) = P(Y|X)$. 

% In supervised learning for classification, the model aims to infer the annotated label $Y$ given image $X$. A robust model will only make predictions based on the right cause $P(Y|do(X)$. 

%  $P(Y=y|do(X=x))=P(Y=y|do(X_F=x_f))$, and ignore the fake correlations $P(Y=y|X_Z=x_z)$

% The image $X$ can be represented as a collection of features, some of which correspond to causal features $X_F$, such as shape, and some of which are nuisance features $X_Z$, such as viewpoint or illumination. The ideal robust model  features, which satisfies $P(Y=y|X=x) = P(Y=y|X_F=X_F)$.

% We denote the causal effect from $X$ to $Y$ as $P(Y|do(X))$, which we want to estimate in the paper.

Natural images are often biased by unobserved confounding factors that are common causes to both the image $X$ and the label $Y$. A passively collected vision dataset only enables us to observe the variables $X$ and $Y$. Theoretically, we cannot identify the causal effect $P(Y|do(X))$ in Figure \ref{fig:causalgraph}(a) with only the observed joint distribution $P(X,Y)$ because there is an unobserved common cause.

% If one can observe the joint distribution $P(X,Y,C)$, then the causal effect from $X$ to $Y$ can be estimated by $P(y|do(x)) = \sum_{c} {P(c)P(y|x,c)}$.

% \begin{align*}
% P(Y=y|do(X=x)) = \sum_{c,e} {P(E=e)P(C=c)P(Y=y|X=x,C=c,E=e)}
% \end{align*}

% However, observing the whole distribution of the confounding factors is often not possible. 

We thus want to collect faithful data independent of the confounding bias, so that we can identify the causal effect with only the observed data. We need to intervene on the data-generation process for the nuisances $Z$ to be independent to the confounders, while keeping the core object features $F$ unchanged. In the physical world, such interventions correspond to actively manipulating the camera or objects in the scene. In our paper, we perform such interventions via steering the generative models. The outcome of this intervention on $Z$ is visualized in Figure \ref{fig:causalgraph}(b), which manipulates the causal graph such that dependencies arriving at $Z$ are removed. Removing the backdoor, the correlation is now equal to the causality, i.e.,  $P(Y|X) = P(Y|do(X))$. While this result is intuitive, performing perfect intervention in practice is challenging due to the complexity of the natural image distribution. 
% We will show that our approach is effective for causal identification even under imperfect interventions.

\subsection{Causal Effect Bound}
Imperfect interventions can eliminate only some spurious correlations. Though it is theoretically impossible to calculate the exact causal effect $P(\y|do(\x))$ when spurious correlations are not totally removed, we can still estimate the lower and upper bound for $P(\y|do(\x))$.

 Given the observed joint distribution $P(\x,\y)$, Pearl \cite{pearl} identified that $P(\y|do(\x))$ can be bounded by $P(\x,\y) \le P(\y|do(\x)) \le P(\x,\y)+1-P(\x)$, which can be estimated by existing discriminative models without interventions. 
 
 Prior work augments the data by sampling from the GANs without explicit intervention~\cite{lowshotgan, antoniou2017gandataaug, GANaug, GanLiver}, which will yield the same causal bound as the original data. Since GANs capture the same distribution as the observational training set, the spurious correlations remain the same. The sampled transformations $Z$ in Figure \ref{fig:causalgraph} (a) are still dependent on the confounders $C$. Thus, augmenting training data with GANs \cite{CAS}, without intervention is not an effective algorithm for causal identification. 

In this paper, we aim to identify a tighter causal effect bound for $P(\y|do(\x))$ using generative interventions. This is desirable for robustness because it removes or reduces the overlap between the causal intervals, promoting causal predictions. 
Section~\ref{sec:dis} establishes that perfect interventions eliminate all spurious correlation and leads to better generalization. In practice, our generative interventions may only eliminate a subset of spurious correlations  $Z_i$, while other nuisances $Z_U$ remain unobserved and untouched. The next question is then: \emph{what generative intervention strategy is optimal for tightening the causal effect bound?} We derive the following theory:

% \textbf{Eliminating spurious correlation helps generalization.} 

% As shown in , our intervention on nuisances $Z_i$ removes the spurious correlations from $Z_i$ to label $Y$, which shifts the distribution of the generated images $X$ independently of the some of the confounding factors $C$.

% The proper intervention needs to 
%  Our derived theorem can also explain a boarder range of partial intervention strategies.

\begin{theorem}[\textbf{Effective Intervention Strategy}]\label{th:2IV}
We denote the images as $x$. The causal bound under intervention $z_i$ is thus $P(\y,\x|\z_i) \leq P(\y|do(\x)) \leq P(\y,\x|\z_i) + 1 - P(\x|\z_i)$. For two intervention strategies $\z_1$ and $\z_2$,  $\z_1 \subset \z, \z_2 \subset \z$, if  $P(\x|\z_1) > P(\x|\z_2)$, then $\z_1$ is more effective for causal identification. 
\end{theorem}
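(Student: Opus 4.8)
The plan is to reduce the phrase ``more effective for causal identification'' to the \emph{width} of the causal-effect interval, and then show that this width is a strictly decreasing function of $P(\x|\z_i)$. First I would take the stated bound under an intervention $\z_i$ as given: it is the conditional analogue of Pearl's bound recalled in the previous subsection, obtained by applying $P(\x,\y)\le P(\y|do(\x))\le P(\x,\y)+1-P(\x)$ inside the post-intervention distribution in which the nuisance subset $\z_i$ has been fixed, replacing each marginal by its $\z_i$-conditional counterpart. Verifying that this substitution is legitimate --- that the marginal argument survives after fixing only a subset $\z_i \subset \z$ while $\z_U$ remains untouched (Figure~\ref{fig:causalgraph}(c)) --- is the one place requiring genuine care, and I expect it to be the main obstacle.

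Granting the bound, the core computation is immediate. I would define the width of the interval as the upper bound minus the lower bound,
\[
W(\z_i) \;=\; \bigl(P(\y,\x|\z_i)+1-P(\x|\z_i)\bigr)-P(\y,\x|\z_i) \;=\; 1-P(\x|\z_i),
\]
so that the only quantity controlling how tightly $P(\y|do(\x))$ is pinned down is $P(\x|\z_i)$; the lower-endpoint term $P(\y,\x|\z_i)$ cancels and contributes nothing to the precision.

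The comparison then follows from monotonicity of the map $t \mapsto 1-t$. Given $P(\x|\z_1) > P(\x|\z_2)$, I would conclude $W(\z_1)=1-P(\x|\z_1) < 1-P(\x|\z_2)=W(\z_2)$, i.e.\ the interval produced by $\z_1$ is strictly narrower. Since both are valid causal bounds, each is guaranteed to contain the true $P(\y|do(\x))$; the strictly shorter interval localizes the causal effect more precisely, which is exactly what it means for $\z_1$ to be more effective for causal identification.

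The subtlety I would flag --- beyond justifying the conditional bound --- is that the two intervals need not be nested, since their lower endpoints $P(\y,\x|\z_1)$ and $P(\y,\x|\z_2)$ differ. I would therefore phrase ``more effective'' in terms of interval \emph{width} (reduced uncertainty about $P(\y|do(\x))$) rather than set containment, which is consistent with the paper's stated goal of reducing the overlap between the candidate causal intervals.
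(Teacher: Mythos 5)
Your width computation is correct as far as it goes, but there are two genuine gaps relative to what the theorem asserts and what the paper proves. First, the bound itself is part of the statement (``The causal bound under intervention $\z_i$ is thus \dots''), and you leave it as an acknowledged obstacle rather than deriving it. The paper obtains it directly: after intervening on $\z_i$ the backdoor adjustment gives $P(\y|do(\x)) = \sum_\c P(\y|\x,\z_i,\c)P(\c)$, and adding and subtracting $\sum_\c P(\y,\x,\c|\z_i)$ yields $P(\y|do(\x)) = \sum_\c P(\y,\x,\c|\z_i) + \sum_\c P(\y|\x,\z_i,\c)\bigl(P(\c)-P(\x,\c|\z_i)\bigr)$, from which $0 \le P(\y|\x,\z_i,\c) \le 1$ gives both endpoints. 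This is a short argument, not a deep one, but it is the part of the theorem you skip.

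Second, and more importantly, the ``subtlety'' you flag --- that the two intervals need not be nested because their lower endpoints differ --- is exactly the point the paper's proof resolves, and your workaround (redefining ``more effective'' as smaller width) discards the content of the argument. In the post-intervention graph all incoming edges to $\z_i$ are cut, so $\z_i \independent \y \mid \x$ and hence $P(\x,\y|\z_i) = P(\y|\x,\z_i)P(\x|\z_i) = P(\y|\x)P(\x|\z_i)$. Writing $\delta_1 = P(\x|\z_1)-P(\x|\z_2) > 0$, the lower endpoints therefore differ by $\delta_2 = P(\y|\x)\,\delta_1$ with $0 < \delta_2 < \delta_1$: the lower endpoint under $\z_1$ moves \emph{up} by $\delta_2$ while the upper endpoint moves \emph{down} by $\delta_1 - \delta_2 > 0$, so the intervals are strictly nested, $[P(\y,\x|\z_1),\, P(\y,\x|\z_1)+1-P(\x|\z_1)] \subset [P(\y,\x|\z_2),\, P(\y,\x|\z_2)+1-P(\x|\z_2)]$. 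Your width-only conclusion is weaker (a narrower interval that drifted outside the wider one would not obviously be ``more effective''), and your assertion that nesting can fail is false under the hypotheses of the theorem. The missing ingredient is precisely the factorization $P(\x,\y|\z_i)=P(\y|\x)P(\x|\z_i)$ supplied by the intervention.
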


\begin{proof}
% We denote all the unobserved confounding variables as $u$.
Figure \ref{fig:causalgraph}(c) shows the causal graph after intervention $Z_i$, where $Z_i \independent Y | X$.  We add and remove the same term $\sum_c P(\y,\x,\c|\z_i)$:
% \begin{align*}
% P(y|do(x)) &= P(y|do(x), do(z)) = P(y|do(x), z) = \sum_c P(y|x,z,c)P(c)\\   % \sum_u P(y,x,u|z)
% &= \sum_c P(y,x,c|z) + \sum_c{P(y|x,z,c)(P(c)-P(x,c|z))}
% \end{align*}
\begingroup\makeatletter\def\f@size{9}\check@mathfonts
\begin{align*}
\begin{split}
& P(\y|do(\x)) = \sum_c P(\y|\x,\z_i,\c)P(\c) \quad\;\;\;  \text{(Backdoor Criteria)} \\
&= \sum_\c P(\y,\x,\c|\z_i) + \sum_\c{P(\y|\x,\z_i,\c)(P(\c)-P(\x,\c|\z_i))}
\end{split}
\end{align*}
\endgroup
Since $0 \leq P(\y|\x,\z_i,\c) \leq 1$, we have the lower and upper bounds. 
We denote $\delta_1 = P(\x|\z_1) - P(\x|\z_2)$, thus  $\delta_1>0$. In the causal graph (Figure \ref{fig:causalgraph}(c)), since we intervene on $\z_i$, all incoming edges  to $\z_i$ are removed; 
we then have $\z_i \independent \y | \x$ and $P(\x,\y|\z_i) = P(\y|\x,\z_i)P(\x|\z_i)= P(\y|\x)P(\x|\z_i)$. Therefore $\delta_2=P(\x,\y|\z_1)-P(\x,\y|\z_2)=\delta_1 \cdot P(\y|\x)$. %, then $\delta_2 = \delta_1 \cdot P(y|x)$. 
Since apparently $0<P(\y|\x)<1$, we have that $0<\delta_2<\delta_1$. Thus we obtain $[P(\y,\x|\z_1), P(\y,\x|\z_1) + 1 - P(\x|\z_1)] \subset [P(\y,\x|\z_2), P(\y,\x|\z_2) + 1 - P(\x|\z_2)]$, which means the intervention $\z_1$ results in a tighter causal effect bound.
% \begin{align*}
% P(y|do(x)) &\leq \sum_c P(y,x,c|z) + \sum_c(P(c)-P(x,c|z)) = P(y,x|z) + 1 - P(x|z)\\
% P(y|do(x)) &\geq \sum_c P(y,x,c|z) = P(y,x|z)
% \end{align*}
% \begin{align*}
% P(y,x|z_i) \leq P(y|do(x)) \leq P(y,x|z_i) + 1 - P(x|z_i)
% \end{align*}
% For the chosen interventional variable $z_1$ and $z_2$, the identifiable bound is $I_1 = [P(y,x|z_1), P(y,x|z_1) + 1 - P(x|z_1)]$ and $I_2 = [P(y,x|z_2), P(y,x|z_2) + 1 - P(x|z_2)]$.
\end{proof}
% 
% Given the conditions $P(x,y|z_1) = P(x,y|z_2) + \delta_2$,  $\delta_1 > \delta_2 > 0$,
% , then $L_1<L_2$, which means the  length of the bound interval is smaller. 
% , and the interval length is $L_1=1 - P(x|z_1)$ and $L_2=1 - P(x|z_2)$ respectively

Our theorem shows that: the optimal intervention strategy should maximize $P(\x|\z)$, which will tighten the causal effect bound $P(\y|do(\x))$. Also, the intervention strategy should be identically selected across all categories, so that they are independent of the confounding bias. While there are different choices of intervening on the generative model to create independence, we empirically select our generative intervention strategy that increases $P(\x|\z)$, which we will discuss in Section \ref{sec:int_effect}.

% The theorem shows that  Furthermore, it shows that w

%Note that for generative models without intervention, $z_2=\emptyset$. A proper intervention $z_1$ satisfies $P(x|z_1)>P(x|\emptyset)$.
% For a non-linear model, sufficient interventions should produce data that resembles the data distribution $x$ (higher $P(x|z)$).
% The theorem shows that we should intervene the nuisances randomly to cut off the dependency to confounders, and produce realistic images to increase the $P(x|z)$ to tighten the identification bound. Without intervention, $z_2=\emptyset$. Proper interventions $z_1$  on the generative model will satisfy $P(x|z_1)>P(x|\emptyset)$. We empirically estimate $P(x|z)$ under interventions in Figure  and discuss in Section \ref{sec:int_effect}.
% The ideal conditional generator further guarantee that the generated image will be classified to the correct class, yielding higher $P(y|x)$, which tighten the causal effect bound.

% The theory suggests to make interventions to produce high quality data $x$---both realistic  and ideally will be classified correctly by the human  --- while diversifying the generated mode to decrease the correlation with the label $y$, i.e., $\delta_2 > \delta_1$.

% Thus GAN with higher generation quality, the more realistic and the more mode the generator captures, the better the performance will be. Even with the same GAN model, according to our theorem \ref{th:2IV}, it is important to choose the $Z$ to intervene. 

\vspace{-3mm}
\section{Method}
\label{sec:method}
We show how deep generative models can be used to construct interventions on the spuriously correlated features in the causal graph. We combine these results to develop a practical framework for robust learning. 

\subsection{Learning Objective}

We minimize the following training loss on our intervened data:
% \[ \mathcal{L} =  \mathcal{L}_{e}(\phi(\X),\Y) + \sum_i \lambda_i \mathcal{L}_{e}(\phi(\X_i'),\Y_i')  \]
% \[ \mathcal{L} =  \mathcal{L}_{e}(\phi(\X),\Y) +  \lambda_1 \mathcal{L}_{e}(\phi(\X_{int}),\Y_i) + \lambda_2 \mathcal{L}_{e}(\phi(T(\X_{int}, \X)),\Y) \]
\begin{equation}
\begin{split}
    \mathcal{L} = &  \mathcal{L}_{e}(\phi(\X),\Y) +  \lambda_1 \mathcal{L}_{e}(\phi(\X_{int}),\Y') \\
    & + \lambda_2 \mathcal{L}_{e}(\phi(\X_{itr}),\Y'')
    \end{split}
\end{equation}
% \[ \mathcal{L} = \sum_i \lambda_i \mathcal{L}_{e}(\phi(\X_i),\Y_i)  \]
where $\mathcal{L}_{e}$ denotes the standard cross entropy loss and $\lambda_i \in \mathbb{R}$ are hyper-parameters controlling training data choice.  We denote the original data matrix as $\X$ with target labels $\Y$; the generated data matrix as $\X_{int}$ (Section~\ref{method:genint}) with target labels $\Y'$; the transfered data as $\X_{itr}$ (Section~\ref{sec:transfer}) with target labels $\Y''$; and the discriminative classifier as $\phi$. 

The last two terms of this objective are the interventions. In the remainder of this section, we present two different ways of constructing these interventions.

\subsection{Generative Interventions}\label{method:genint}

%  \cite{hrknen2020ganspace, jahanian2019steerability}. This property enables us to methodically randomize a set of spurious contextual features while maintaining the features for the object of interest. 

We construct interventions using conditional generative adversarial networks (CGAN). We denote the $i$-th layer's hidden representation as $\h_i$. CGAN learns the mapping $\x = G(\h_0,\y)$, where $\h_0 \sim \mathcal{N}(0, I)$ is the input noise, $\y$ is the label, and $\x$ is a generated image of class $\y$ that lies in the natural image distribution. CGANs are trained on the joint data distribution $P(\x,\y)$. While we can use any type of CGANs, we select BigGAN \cite{BigGAN} in this paper since it produces highly realistic images. In addition, generative models learn a latent representation $\h_i$ equivariant to a large class of visual transformations and independent of the object category \cite{hrknen2020ganspace, jahanian2019steerability}, allowing for controlled visual manipulation.
%Since the model is conditional, a random walk along $h$ will generate images that are still representative of the class $y$.
For example, GANSpace \cite{hrknen2020ganspace} showed that the principal components of $\h_i$ correspond to visual transformations over camera extrinsics and scene properties. The same perturbations in the latent space will produce the same visual transformations across different categories. Figure~\ref{fig:equi} visualizes this steerability for a few samples and different transformations. 
This property enables us to construct a new training distribution, where the nuisance features $Z$ are not affected by the confounders. 

% The theorem suggests that we should select the interventions that maximize $P(X|Z)$.

Our generative intervention strategy follows the GANSpace \cite{hrknen2020ganspace} method, which empirically steers the GAN with transformations independent of the categories. It contains three factors: the truncation value, the transformation type, and the transformation scale. The input noise $\h_0$ is sampled from Gaussian noise truncated by value $t$ \cite{BigGAN}.
We define the transformations to be along the $j$-th principal directions $\r_j$ in the feature space \cite{hrknen2020ganspace}, which are orthogonal and captures the major variations of the data. We select the top-$k$ significant ones $\{\r_1, \r_2, ..., \r_k\}$ as the intervention directions. We then intervene along the selected directions with a uniformly sampled step size $s'$ from a range $[-s, s]$. We intervene on the generator's intermediate layers with $\h_i^* = \h_i + \sigma s' \r_j - \mu $, where $\h_i^*$ are the features at layer $i$ after interventions, $\sigma$ is the standard deviation of noise on direction $\r$, and $\mu$ is the offset term. After the intervention, we follow the method in GANSpace  \cite{hrknen2020ganspace} to recover $\h_0^*$ with regression and generate the new image $\x^* = G(\h_0^*, \y)$.
 Using conditional generative models, we produce the causal features $X_F$ by specifying the category. Our intervention removes the incoming edge from $C$ to $Z_i$ (Figure \ref{fig:causalgraph} (c)). We denote the intervention procedure as function $I$, and rewrite the generative interventions as:
 \[\X_{int} = I(t, s, k, \Y')\]
 Based on our Theorem~\ref{th:2IV}, we choose the hyper-parameters $t, k, s$ for intervention $Z$ that maximizes $P(\x|\z)$. We show ablation studies in Section \ref{sec:int_effect}.

%  In practice, there are some transformations $Z_U$ that the generator will fail to capture (Figure \ref{fig:causalgraph} (c)). Though the missing factors would remain as back-doors for the causal effect identification, our theory shows that the intervention still tightens the causal effect bound.

\begin{table*}[t]
\begin{center}
      \vspace{-3mm}
    \small
    \centering
    \begin{tabular}{l|cc|cc|cc|cc}
         \toprule
         & \multicolumn{4}{c|}{ResNet 18}  & \multicolumn{4}{c}{ResNet 152} \\
         & \multicolumn{2}{c|}{Std.\ Augmentation}  & \multicolumn{2}{c|}{Add.\ Augmentation} & \multicolumn{2}{c|}{Std.\ Augmentation}  & \multicolumn{2}{c}{Add.\ Augmentation}\\
          Training Distribution & top1   & top5  & top1 & top5 & top1   & top5  & top1 & top5\\
         \midrule  
        % \parbox[t]{1mm}{\multirow{5}{*}{\rotatebox[origin=c]{90}{ResNet18}}}
          ImageNet Only \cite{He_2016, Objectnet}  & 20.48\% & 40.64\% & 24.42\% & 44.39\% & 30.00\% & 48.00\% & 37.43\% & 59.10\% \\
         Stylized ImageNet \cite{imagenetbiased} & 18.39\% & 37.29\% & 22.81\% & 42.27\% & 31.64\% & 52.56\% & 36.17\% & 57.95\%  \\
         Mixup \cite{zhang2017mixup} & 19.12\% & 37.78\% & 24.05\% & 44.17\% & 34.27\% & 55.68\% & 38.61\% & 60.36\%\\
         AutoAug \cite{cubuk2018autoaugment} & 21.20\% & 41.26\% & 21.20\% & 41.26\% & 33.96\% & 55.81\% & 33.96\% & 55.81\% \\
          GAN Augmentation \cite{CAS}  & 20.63\%  & 39.77\% &  23.72\% & 43.67\% & 33.17\% & 54.59\% & 36.37\% & 58.88\% \\
          \midrule
          GenInt (ours)  & 22.07\% & \textbf{41.94\%} &  25.71\% & 46.39\% & 34.47\% & 55.63\% & 39.21\% & 61.06\%\\
         % Intervention GAN Transfer (ours) & 22.29\% & 41.76\% & 27.02\% & 47.51\% & 34.55\% & 55.64\% & \textbf{39.44\%} & \textbf{61.56\%} \\
          GenInt with Transfer (ours) & \textbf{22.34\%} & 41.65\% & \textbf{27.03\%} & \textbf{48.02\%} & \textbf{34.69\%} & \textbf{55.82\%} & \textbf{39.38\%} & \textbf{61.43\%}\\
        
         \bottomrule
    \end{tabular}
\end{center}
      \vspace{-2mm}
\caption{Accuracy on the ObjectNet test set versus training distributions. By intervening on the training distribution with generative models, we obtain the state-of-the-art performance on the ObjectNet test set, even though the model was never trained on ObjectNet.} \label{tab:objectnet}
\end{table*}

\subsection{Transfer to Natural Data}\label{sec:transfer}

Maintaining the original training data $\X$ will add confounding bias to models. While our theory shows that our method still tightens the causal effect bound under the presence of spurious correlations, it is desirable to eliminate as many spurious correlations as possible. We will therefore also intervene on the original dataset.

%by transferring the generative interventions to  To do this, we  with generative interventions by transferring our generative interventional data's style to the original data.

One straightforward approach is to estimate the latent codes in the generator corresponding to the natural images, and apply our above intervention method.  We originally tried projecting the images back to the latent space in the generative models  \cite{zhu2018generative, huh2020ganprojection}, but this did not obtain strong results, because the projected latent code cannot fully recover the query images \cite{bau2019seeing}.

Instead, we propose to transfer the desirable generative interventions from $\X_{int}$ to the original data $\X$ with neural style transfer \cite{style_transfer}. The category information is maintained by the matching loss while the intervened nuisance factors are transferred via minimizing the maximum mean discrepancy \cite{li2017demystifying}. Without projecting the images to the latent code, the transfer enables us to intervene on some of the nuisance factors $z$ in the original data, such as the background. The transfer of the generative interventions $I(t, k, s, \Y')$ to natural data $\X$ is formulated as: 
\[\X_{itr} = T(I(t, k, s, \Y'), \X)\]
where $T$ denote the style transfer mapping. The corresponding label $\Y''$ is the same label as for $\X$. Please see supplemental material for visualizations of these interventions.
%We visualize the transferring results in the supplementary.

\begin{table*}[t]
\centering
\setlength{\tabcolsep}{3.5pt}
\scriptsize
\vspace{-3mm}
    \begin{tabular}{ll|c|ccc|cccc|cccc|cccc}
         \toprule
         & Model & \textbf{mCE $\big\downarrow$} & Gauss. & Shot & Impulse & Defocus & Glass & Motion & Zoom & Snow & Frost & Fog & Bright & Contrast & Elastic & Pixel & JPEG \\
         \midrule  
          & AlexNet & 100.00 &100 &100 &100 &100 &100 &100 &100 &100 &100 &100 & 100 &100 &100 &100 &100  \\
          \midrule
          \parbox[t]{3mm}{\multirow{6}{*}{\rotatebox[origin=c]{90}{ResNet 18 \cite{He_2016}}}} & ImgNet Only \cite{He_2016} & 87.16 & 89.5&	90.4&	93.0 &	86.01 &	93.3 &	87.7 &	90.0 &	87.5 &	86.4 &	80.0 &	73.7 &	80.5 &	91.5 &	85.5 &	92.4 \\    % this is best: tune additional 20 epoch\\
          & Stylized ImgNet \cite{imagenetbiased} & 80.83 & 79.1&	80.9&	81.7&	81.7&	87.6&	80.0&	90.0&	78.3&	80.2&	76.2&	72.5&	77.2&	\textbf{84.1} &	76.2&	86.7\\
          & Mixup \cite{zhang2017mixup} & 86.06 & 86.8 &	88.1&	90.8&	88.7&	95.6&	89.1&	89.3&	82.5&	\textbf{72.8}&	71.9&	75.9&	76.5 &	96.2&	89.5&	97.2\\
          & AutoAug \cite{cubuk2018autoaugment}  & 84.00 & 84.3&	83.7&	84.5&	87.9&	93.6&	87.7 &	93.5&	85.7&	83.4&	\textbf{71.0}&	\textbf{67.4}&	\textbf{63.5}&	97.8&	85.3&	90.5 \\
          & GAN Augmentation \cite{CAS} & 86.48 & 86.4&	87.5&	90.5&	87.0&	92.4&	87.2&	90.3&	88.0&	86.3&	82.8&	73.3&	82.8&	90.7&	84.5&	87.5 \\
           & GenInt (ours) & \textbf{74.68} & \textbf{67.0}&	\textbf{68.4}&	\textbf{67.3}&	\textbf{75.0}&	\textbf{80.5}&	\textbf{76.0}&	\textbf{84.2}&	\textbf{77.4}&	75.9&	77.5&	68.8&	76.6 &	87.5&	\textbf{59.8}&	\textbf{77.5}  \\ % this is combined int gan
           \midrule
          %\toprule
        \parbox[t]{3mm}{\multirow{6}{*}{\rotatebox[origin=c]{90}{ResNet 152 \cite{He_2016}}}}    & ImgNet Only \cite{He_2016} & 69.27 & 72.5&	73.4&	76.3&	66.9&	81.4&	65.7&	74.5&	70.7&	67.8&	62.1&	51.0&	67.1&	75.6&	68.9&	65.1 \\
           & Stylized ImgNet \cite{imagenetbiased} & 64.19& 63.3&	63.1&	64.6&	66.1&	77.0&	63.5&	71.6&	\textbf{62.4} &	65.4&	59.4&	52.0&	62.0&	73.2&	55.3&	62.9 \\
           & Mixup \cite{zhang2017mixup} & 66.43& 69.0& 	71.1& 	73.8& 	67.3& 	83.4& 	65.5& 74.6& 	63.5& 	\textbf{56.9}& 	\textbf{55.2} & 	49.4& 	62.4& 	75.4& 	65.0& 	63.7 \\
           & AutoAug \cite{cubuk2018autoaugment}  & 69.20 & 71.7&	72.8&	75.6&	67.2&	82.1&	67.7&	76.7&	70.3&	67.7&	61.8&	50.5&	65.0&	76.0&	68.3&	64.6 \\
           
           & GAN Augmentation \cite{CAS} & 69.01 & 71.8&	73.1&	75.9&	67.3&	82.3&	67.5&	76.2&	69.9&	68.1&	59.2&	51.3&	62.5&	76.6&	67.7&	65.7 \\
           
           & GenInt (ours) & \textbf{61.70} & \textbf{59.2}&	\textbf{60.2}&	\textbf{62.4}&	\textbf{60.7}&	\textbf{70.8}&	\textbf{59.5}&	\textbf{69.9}&	64.4&	63.8&	58.3&	\textbf{48.7}&	\textbf{61.5}&	\textbf{70.9}&	\textbf{55.2}&	\textbf{60.0} \\
           
         \bottomrule
    \end{tabular}
\vspace{3px}
\caption{The mCE $\downarrow$ rate (the smaller the better) on ImageNet-C validation \cite{imgnet-C} set with 15 different corruptions. Our GenInt model, without training on any of the corruptions, reduces the mCE by up to \textbf{12.48\%}. From column `Gauss.' to column `JPEG,' we show individual Error Rate on each corruption method. Without  adding similar corruptions in the training set, our generative causal learning approach learns models that naturally generalize to unseen corruptions.}
\label{tab:imagenet-C}
\end{table*}
% For example, on the ``zoom'' task, while all the baseline methods are not able to reduce the error from the baseline, we reduce it by 5.8\%, since our interventional GAN removes the zoom confounding bias (See Component 10 in Figure \ref{fig:equi}).

\begin{table*}[t]
    \centering
    \vspace{-2mm}
    \footnotesize
    \begin{tabular}{ll|cc|cc|cc||cc}
         \toprule
         & & \multicolumn{6}{c||}{ImageNet-V2 Grouped by Sampling Strategy \cite{ImageNetOverfit}} & \multicolumn{2}{c}{Original}  \\
         & & \multicolumn{2}{c|}{``TopImages''} & \multicolumn{2}{c|}{``Threshold0.7''} & \multicolumn{2}{c||}{``MatchedFrequency''} & \multicolumn{2}{c}{ImageNet Val}  \\
         & Training Distribution & top1   & top5 & top1   & top5 & top1   & top5  & top1   & top5 \\
         \midrule  
        %  \parbox[t]{1mm}{\multirow{5}{*}{\rotatebox[origin=c]{90}{ResNet152}}}
          %ImageNet Only w/o crop & 76.26\% & 93.34\%  \\
          \parbox[t]{3mm}{\multirow{7}{*}{\rotatebox[origin=c]{90}{ResNet 18 \cite{He_2016}}}} & ImageNet Only \cite{He_2016} & 71.77\% & 91.11\% & 65.41\% & 87.39\% & 56.18\% & 79.35\% & 68.82\% & 88.96\%\\    
          & Stylized ImageNet \cite{imagenetbiased} & 69.55\% & 89.97\% & 62.92\% & 85.38\% & 54.13\% & 77.30\% & 66.95\% & 87.42\%\\
          & Mixup \cite{zhang2017mixup} & 69.90\% & 90.16\% & 63.42\% & 86.40\% & 54.42\% & 77.94\% & 66.00\% & 86.93\%\\
          & AutoAug \cite{cubuk2018autoaugment} & 72.05\% & 91.49\% & 65.32\% & 87.32\% & 56.25\% & 79.16\% & 69.24\% & 88.91\%\\
          
           & GAN Augmentation \cite{CAS}  & 72.01\% & 91.24\% & 65.72\% & 87.58\% & 56.43\% & 79.42\% & 69.19\% & 88.85\% \\
           \cmidrule{2-10}
           & GenInt (ours) & 72.80\% & \textbf{91.89\%} & 66.26\% & \textbf{88.30\%} & \textbf{57.86\%} & \textbf{80.11\%} & \textbf{70.41\%} & \textbf{89.59\%} \\
        %   & Interventional GAN Transfer  & \textbf{72.96\%} & 91.84\% & 65.93\% & \textbf{88.31\%} & 57.42\% & 79.84\% & 70.13\% & 89.38\% \\
           
           & GenInt with Transfer (ours) & \textbf{72.84\%} & 91.85\% & \textbf{66.49\%} & 88.11\% & 57.35\% & 79.61\% & 70.25\% & 89.33\%\\
           
           \midrule
           %\toprule
           \parbox[t]{3mm}{\multirow{7}{*}{\rotatebox[origin=c]{90}{ResNet 152 \cite{He_2016}}}}   & ImageNet Only \cite{He_2016} & 81.01\% & 96.21\% & 76.17\% & 94.12\% & 67.76\% & 87.57\% & 78.57\% & 94.29\% \\    
          & Stylized ImageNet \cite{imagenetbiased} & 79.40\% & 95.72\% & 74.02\% & 92.88\% & 65.12\% & 86.22\% & 77.27\% & 93.76\% \\
          & Mixup \cite{zhang2017mixup} & 80.68\% & 96.28\% & 75.91\% & 94.00\% & 67.11\% & 87.66\% & 78.78\% & 94.45\% \\
          & AutoAug \cite{cubuk2018autoaugment} & 80.61\% & 96.30\% & 75.90\% & 94.06\% & 67.35\% & 87.61\% & 78.95\% & 94.56\%\\
          
           & GAN Augmentation \cite{CAS}  & 80.10\% & 96.00\% & 75.60\% & 93.74\% & 66.89\% & 87.04\% & 78.53\% & 94.21\% \\
           \cmidrule{2-10}
           & GenInt (ours) & 80.77\% & \textbf{96.38\%} & 76.20\% & \textbf{94.24\%} & 67.74\% & \textbf{87.83\%} & 79.46\% & 94.71\% \\
        %   & Interventional GAN Transfer  & 81.04\% & 96.15\% & 76.13\% & 93.89\% & 68.08\% & 87.83\% &79.51\% & 94.81\% \\
           
           & GenInt with Transfer (ours) & 
            \textbf{81.24\%} & 96.28\% & \textbf{76.60\%} & 93.95\% & \textbf{68.08\%} & 87.70\%& \textbf{79.59\%} & \textbf{94.79\%} \\
           
           % only tune 10 epoch \\
         \bottomrule
    \end{tabular}
\vspace{3px}
\caption{Accuracy on ImageNet V2 validation set \cite{ImageNetOverfit} and original ImageNet validation set. Our method improves the performance upon the baselines, which suggests our causal learning approach does not hurt the performance on original test set while becoming robust.}
\vspace{-5mm}
\label{tab:imagenet-v2-18}
\end{table*}

\section{Experiments}

% We use BigGAN \cite{BigGAN} for our generator, which generates high fidelity images conditioned on image category. We use ResNet models \cite{He_2016} as our discriminative classifier.

We present image classification experiments on four datasets --- ImageNet, ImageNet-V2, Imagenet-C, and ObjectNet --- to analyze the generalization capabilities of this method and validate our theoretical results. We call our approach \textbf{GenInt} for generative interventions, and compare the different intervention strategies. 

%We analyze which regions the model relies on for making predictions \cite{gradcam}.

%Our results together suggest that our model generalizes better because it learns more robust discriminative features that are  more consistent with the causal relationship. We also show a motivating example on Colored MNIST dataset \cite{arjovsky2019invariant} in the supplementary, where we beat the state-of-the-art invariant risk minimization \cite{arjovsky2019invariant} model. 

\subsection{Datasets}

In our experiments, all the models are first trained on \textbf{ImageNet} \cite{imagenet_cvpr09}  (in addition to various intervention strategies). We train only on ImageNet without any additional data from other target domains. We directly evaluate the models on the following out-of-distribution testing sets:

\textbf{ObjectNet} \cite{Objectnet} is a test set of natural images that removes background, context, and camera viewpoints confounding bias. Improving performance on ObjectNet---without fine-tuning on it---indicates that a model is learning causal features. ObjectNet's policy prohibits any form of training on the ObjectNet data. We measure performance on the 113 overlapping categories between ImageNet and ObjectNet. 
 
\textbf{ImageNet-C} \cite{imgnet-C} is a benchmark for model generalization under 15 common corruptions, such as 'motion,' 'snow,' and 'defocus.' Each corruption has 5 different intensities. We use mean Corruption Error (mCE) normalized by AlexNet as the evaluation metric \cite{imgnet-C}. Note that we do not train our model with any of these corruptions, thus the performance gain measures our model's generalization to unseen corruptions.
 
\textbf{ImageNet-V2} \cite{ImageNetOverfit} is a new test set for ImageNet, aiming to quantify the generalization ability of ImageNet models. It contains three sampling strategies: MatchedFrequency, Threshold0.7, and TopImages. While current models are overfitting to the ImageNet test set, this dataset measures the ability to generalize to a new test set.

% Our models are only trained on the ImageNet data. We do not include any extra data, both labeled or unlabeled.
% \textbf{Datasets:} We now turn our attention to natural images. We use two datasets. Firstly, 
% Secondly,  Figure \ref{fig:img-obj-compare} shows examples for both datasets, illustrating the spurious correlations and difficulty of generalizing from ImageNet to ObjectNet. 

%  While the gain on other datasets (without explicitly controlled the confounding bias)

%Despite the advances of deep generative models, a gap still exists between GAN generated images and the real images. For example, current deep generative model may fail to generate data from certain modes due to mode collapse \cite{modecollapse}.

\subsection{Baselines}
We compare against several established data augmentation baselines:

\textbf{Stylized ImageNet} refers to training the model using style transferred dataset \cite{imagenetbiased}, which trains classifiers that are not biased towards texture.

\textbf{Mixup} \cite{zhang2017mixup} does linear interpolation to augment the dataset. We use their best hyperparameters setup ($\alpha=0.4$).

\textbf{AutoAug} \cite{cubuk2018autoaugment} systematically optimizes the strategy for data augmentation using reinforcement learning.

\textbf{GAN Augmentation} refers to the method that augments the ImageNet data by directly sampling from the BigGAN \cite{CAS}. They provide  an extensive study for hyper-parameter selection. We use their best setup as our baseline: 50\% of synthetic data sampled from BigGAN with truncation 0.2. 

\textbf{ImageNet only} refers to training the standard model on ImageNet dataset only \cite{He_2016}.

\subsection{Empirical Results}

 Our GenInt method demonstrates significant gains on four datasets over five established baselines.
 We report results for two different network architectures (ResNet18, ResNet152). All ResNet18 models are trained with SGD for 90 epochs, we follow the standard learning rate schedule where we start from 0.1, and reduce it by 10 times every 30 epochs.
For ResNet152 models, we train ``ImageNet only'' models using the above mentioned method, and finetune all the other methods from the baseline for 40 epochs given that it is computationally expensive to train ResNet-152 models from scratch. 
For GenInt, we all use $\lambda_1=0.05$ and $\lambda_2=0$ for ResNet18 and $\lambda_1=0.2$ and $\lambda_2=0$ for ResNet152. For \textbf{GenInt with Transfer}, we use $\lambda_1=0.02$ and $\lambda_2=1$ for our experiments on Resnet18 with standard augmentation, $\lambda_1=0.05$ and $\lambda_2=1$ for our experiments on Resnet18 with additional augmentation, and $\lambda_1=0.2$ and $\lambda_2=0.2$ for our finetuning on ResNet152. We select hyperparameters of our intervention strategy in Section \ref{sec:int_effect}. Implementation details are in the supplementary.

\textbf{ObjectNet:}
Table \ref{tab:objectnet} shows that our model can learn more robust features, and consequently generalizes better to ObjectNet without any additional training. 
Our results consistently outperform the naive sampling from generative models \cite{CAS} and other data augmentation strategies \cite{zhang2017mixup, imagenetbiased, cubuk2018autoaugment} for multiple metrics and network architectures, highlighting the difference between traditional data augmentation and our generative intervention. Our approach enjoys benefits by combining with additional data augmentations, demonstrated by the differences between the ``Std.\ Augmentation'' columns and the ``Add.\ Augmentation'' columns.\footnote{Standard augmentation only uses random crop and horizontal flips \cite{pytorch_imagenet}. Additional augmentation method uses rotation and color jittering \cite{tian2020rethinking}.} This improvement suggests that our generative intervention can manipulate additional nuisances (viewpoints, backgrounds, and scene contexts) orthogonal to traditional augmentation, which complements existing data augmentation methods. Moreover, our results suggest that intervening on the generative model is more important than just sampling from it.

\begin{figure}[t]
    \centering
    \includegraphics[width=1.0\linewidth]{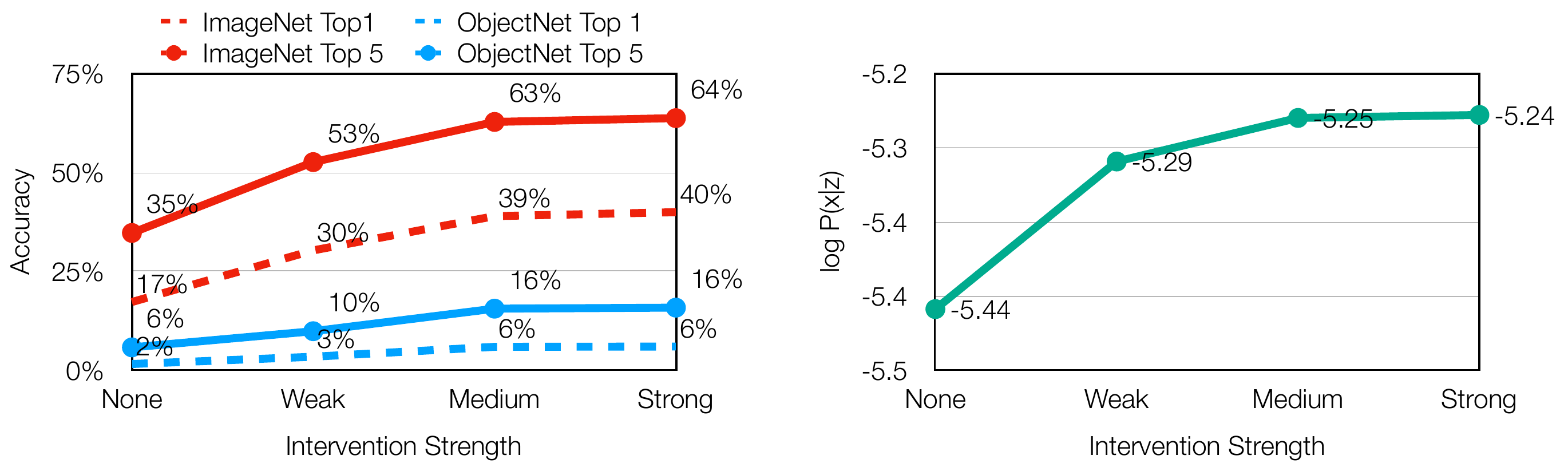}
    \caption{As the strength of the intervention increases, the value of $\log P(x|z)$ increases, which improves the performance of ResNet-18 model. }
    \label{fig:intervention}
    \vspace{-4mm}
\end{figure}

\begin{figure}[t]
    \centering
    \includegraphics[width=\linewidth]{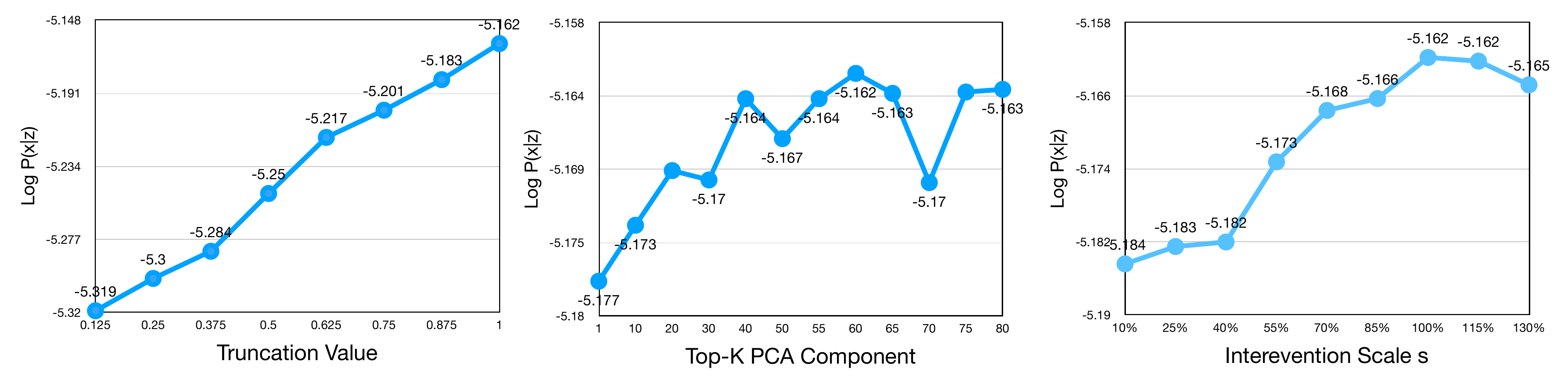}
    \caption{$\log P(x|z)$ for causal effect bound under different intervention strategies. The x-axis of each subfigure changes one hyper-parameter for intervention strategy: truncation value $t$ (left), PCA number $k$ (middle), and the intervention scale $s$ (right). Based on theorem \ref{th:2IV}, we choose the hyper-parameters $t,k,s$ that produces the highest value for $\log P(x|z)$ from individual figure.}
    \label{fig:intervention_ablation}
    \vspace{-4mm}
\end{figure}

\textbf{ImageNet-C:} To further validate that our approach learns causality, and not just overfits, we measure the same models' generalization to unseen corruptions on ImageNet-C. We evaluate performance with mean corruption error (mCE) \cite{imgnet-C} normalized by CE of AlexNet. Table \ref{tab:imagenet-C} shows that directly sampling from GAN as augmentation (GAN Augmentation) slightly improves performance (less than 1\%). Stylized ImageNet achieves the best performance among all the baselines, but it is still worse than our approach in mCE. In addition, Stylized ImageNet hurts the performance on ObjectNet, which suggests its high performance on corruptions is overfitting to the correlations instead of learning the causality. Our approach outperforms baseline by up to 12.48\% and 7.57\% on ResNet18 and ResNet152 respectively, which validates that our generative interventions promote causal learning.

% Training a resnet 18 model on our intervened data directly learns better causal features, which reduces the mCE by up to 12.48\%.

\textbf{ImageNet and ImageNet-V2:} Table \ref{tab:imagenet-v2-18} shows the accuracy on both validation sets. Some baselines, such as Stylized ImageNet, hurt the performances on the ImageNet validation set, while our approach improves the performance.  

Overall, without trading-off the performance between different datasets, our approach achieves improved performance for all test sets, which highlights the advantage of our causal learning approach.

% Other augmentation methods are compatible to our theory and can be interpreted as additional randomization of $Z$.

% Furthermore, Table \ref{tab:imagenet}  shows that our approach either maintains or improves accuracy on ImageNet, suggesting robust features do not degrade performance on the original dataset.

% which demonstrates that our intervention is effective and compatible with existing data augmentation methods.

\begin{table}
\centering
    \centering
    \scriptsize
    \begin{tabular}{lc|cc}
         \toprule
         & Truncation & \multicolumn{2}{c}{ImageNet}  \\
          Training Dist.& & top1&top5\\
         \midrule  
        %  Chance & 10\% & 10\%  \\
         Obervational GAN \cite{CAS}  & 1.0 & 39.07\% & 62.97\%  \\
        %  IRM \cite{arjovsky2019invariant} \\
         Obervational GAN \cite{CAS} & 1.5 & 42.65\% & 65.92\% \\
         Obervational GAN \cite{CAS} & 2.0 & 40.98\% & 64.37\%\\
         Interventional GAN (ours) & 1.0 & \textbf{45.06\%} & \textbf{68.48\%}\\
        %  Original Data + GAN  &  \\
        %  Original Data + Intervened GAN  & & & \\
         \bottomrule
    \end{tabular}
\vspace{3px}
\caption{We show performance for ResNet50 trained only on BigGAN. Our intervention model surpasses performance of the best established benchmark \cite{CAS}} \label{tab:CAS}
    \vspace{-5mm}
\end{table}

\subsection{Analysis}\label{sec:int_effect}
% \subsection{Effect of Intervention} 

\textbf{Causal Bound and Performance:} Does tighter causal bound lead to a better classifier? Following Theorem \ref{th:2IV}, we measure the tightness of causal bound after intervention, where we use the log likelihood $\log P(x|z)= \sum_{i}\sum_{x'_j} \log (P(x_i|x'_j)P(x'_j|z))$, where $x_i$ is the query image from the held out ImageNet validation set, and $x'_j$ is the data generated by intervention $z$. We train ResNet18 on our generated data.\footnote{We sample an observational and intervention data from BigGAN with truncation $t=0.5$ \cite{BigGAN}.  Please see supplementary material for full details. }
%   Larger value of $\log P(x|z)$ indicates intervention $z$ produces images that are closer to the causal data distribution, which is independent of the confounding bias.
By varying the intervention strength, we increase the value of $P(x|z)$, which corresponds to a tighter causal bound. Figure \ref{fig:intervention} shows that, as the causal bound getting tighter (left), performance steadily increases (right). 

%  (right),  increases as the intervention strengthen gets stronger. We also train Resnet 18 models only on the data, where Figure \ref{fig:intervention} (left) shows as the strength of the intervention increases,  According to Theorem \ref{th:2IV}, it explains why our intervention strategy is effective for increasing the model robustness.

\textbf{Optimal Intervention Strategy:} Since tighter causal bound produces better models, we investigate the optimal intervention strategy for tightening causal bounds. We study the effect of changing $t, k, s$ for our intervention on the causal bound (Section \ref{method:genint}). We conduct ablation studies and show the trend in Figure \ref{fig:intervention_ablation}. We choose $t=1$, $k=60$, and $s=100\%$ as our intervention strategy for tightest causal bound, which produces $\log P(x|z)=-5.162$ and yields the optimal accuracy of 45.06\% (Table \ref{tab:CAS}) in practice.

\textbf{Importance of Intervention:}
Our results show that creating interventions with a GAN is different from simply augmenting datasets with samples from a GAN. To examine this,
Table \ref{tab:CAS} shows performance on ImageNet when the training sets only consist of images from the GAN. We use the best practices from \cite{CAS}, which comprehensively studies GAN image generation as training data. Our results show that creating interventions, not just augmentations, improves classification performance by 2.4\%-6.0\%.

\begin{figure}[t]
  \centering
      \vspace{-5mm}
  \includegraphics[width=0.47\textwidth]{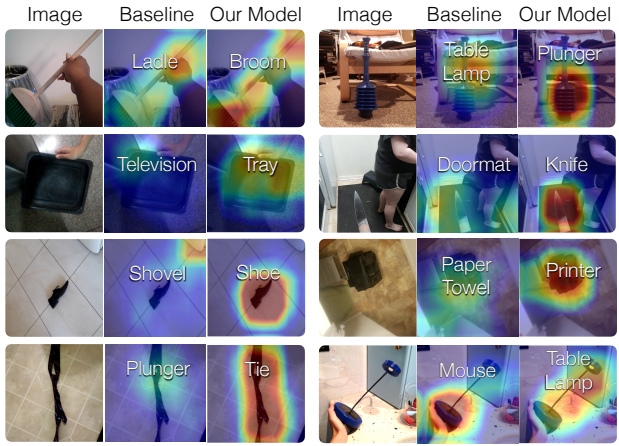}
  %\vspace{-5mm}
  \caption{We visualize the input regions that the model uses to make predictions. Blue implies the model ignores the region for discrimination, while red implies the region is very discriminative. The white text shows the model's top prediction. The baseline frequently latches onto spurious background context (e.g., hand spuriously correlated with ladle, chair spuriously correlated with tablelamp), and consequently makes the wrong prediction. Meanwhile, our model often predicts correctly for the right reasons.}
%   \cv{This looks very nice. Can you make the images larger? Also, the spacing is not consistent.}
  \label{fig:gradcam}
      \vspace{-5mm}
\end{figure}
\subsection{Model Visualization}

% grad cam , resion that importnat on ObjectNet
% \begin{figure}[h]
%   \centering
%   \includegraphics[width=0.8\textwidth]{figures/gradcam_may31.png}
%   \caption{TODO: More on : https://cv.cs.columbia.edu/amogh/invariances/interpret/compare\_bl\_rot\_sigma\_new\_correctonly.html 
%   This is placeholder. Shows how model's focus changes as intervention improves}
%   \label{fig:gradcam}
% \end{figure}

By removing the confounding factors in the dataset, we expect the model to learn to attend tightly to the spatial regions corresponding to the object, and not spuriously correlated contextual regions. To analyze this, Figure \ref{fig:gradcam} uses GradCAM \cite{gradcam} to visualize what regions the models use for making prediction. While
the baseline often attends to the background or other nuisances for prediction, our method focuses on the spatial features of the object. For example, for the first `Broom' image, the baseline uses spurious context `hand,' leading to a misprediction `Ladle,' while our model predicts the right `Broom' by looking at its shape. This suggests that, in addition to performance gains, our model predicts correctly for the right reasons. 

% take a black block to block out the 

%\subsection{Visualizing feature Space}
% t-sne for feature space, pca on feature, nearest neighbor.

% test on the intervene test set, and compare the performance.

% CZ: Influential function can measure how new generated data shape the model.

\section{Conclusion}

Fortifying visual recognition for an unconstrained environment remains an open challenge in the field.
We introduce a method for learning discriminative visual models that are consistent with causal structures, which enables robust generalization. By steering generative models to construct interventions, we are able to randomize many features without being affected by confounding factors. We show a theoretical guarantee for  learning causal classifiers under imperfect interventions, and demonstrate improved performance on ImageNet, ImageNet-C, ImageNet-V2, and the systematically controlled ObjectNet.

{\small
\textbf{Acknowledgments:} This research is based on work partially supported by NSF NRI Award \#1925157, the DARPA SAIL-ON program under PTE Federal Award No.\ W911NF2020009, an Amazon Research Gift, NSF grant CNS-15-64055, ONR grants N00014-16-1- 2263 and N00014-17-1-2788, a JP Morgan Faculty Research Award, a DiDi Faculty Research Award, a Google Cloud grant, and an Amazon Web Services grant. We thank NVidia for GPU donations. The views and conclusions contained herein are those of the authors and should not be interpreted as necessarily representing the official policies, either expressed or implied, of the U.S. Government.
}

{
\small
\bibliography{reference}
}

\externaldocument[S-]{supplementary}

\newpage
\newpage
\begin{subappendices}

% this must go after the closing bracket ] following \twocolumn[ ...

% This command actually creates the footnote in the first column
% listing the affiliations and the copyright notice.
% The command takes one argument, which is text to display at the start of the footnote.
% The \icmlEqualContribution command is standard text for equal contribution.
% Remove it (just {}) if you do not need this facility.

%\printAffiliationsAndNotice{}  % leave blank if no need to mention equal contribution
%\printAffiliationsAndNotice{\icmlEqualContribution} % otherwise use the standard text.

% 
\section{Motivating Experiment}

We create a controlled experiment to show that generative models can construct interventions for causal learning. Our experiments demonstrate that generative models can inherently discover nuisances and intervene on the confounding factors, creating extrapolated data beyond the training distribution with confounding bias.  

\begin{table}
\centering
    \centering
    %\scriptsize
    \small
    \begin{tabular}{l|c|c}
         \toprule
         & Confounded & Causal \\
         & Test Accuracy  &  Test Accuracy\\
         \midrule  
         Chance & 10\% & 10\%  \\
         Original Data  & \textbf{99.45\%} & 8.261\%  \\
         IRM \cite{arjovsky2019invariant} & 87.32\% & 18.49\% \\
         Observational CVAE \cite{CVAE} & 59.949\%& 11.255\%\\
         Interventional CVAE & 58.478\%& \textbf{29.618\%}  \\
        %  Original Data + GAN  &  \\
        %  Original Data + Intervened GAN  & & & \\
         \bottomrule
    \end{tabular}
\caption{We show 10-way classification accuracy on the Colored MNIST dataset. Color is a spurious correlation that no longer holds during the causal test. Our generative intervention strategy advances the state-of-the-art IRM method by 11.12\%} \label{tab:colorfulmnist}
\end{table}

\begin{figure}
  \centering
  \subfloat[][Background 1]{\includegraphics[width=0.25\linewidth]{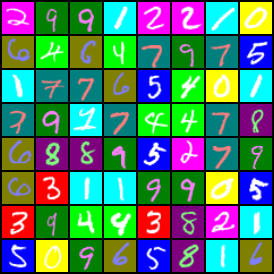}\label{scm1}}
  \hspace{5em}
  \subfloat[][Background 2]{\includegraphics[width=0.25\linewidth]{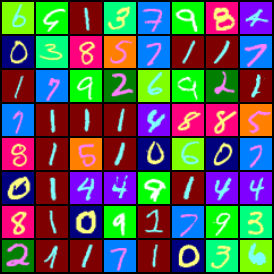}\label{scm2}}
  \caption{Illustration of color MNIST dataset. For each digit category, we generate two different background colors. The feature background color is spuriously correlated to the category, where the confounder is us, the dataset creator. But the observed data is only color digits and corresponding targets.
%   the background no longer has predictive power over the label $y$, enabling robust prediction.
  }
  \label{fig:train-color}
\end{figure}

\begin{figure*}
  \centering
  \includegraphics[width=1.0\textwidth]{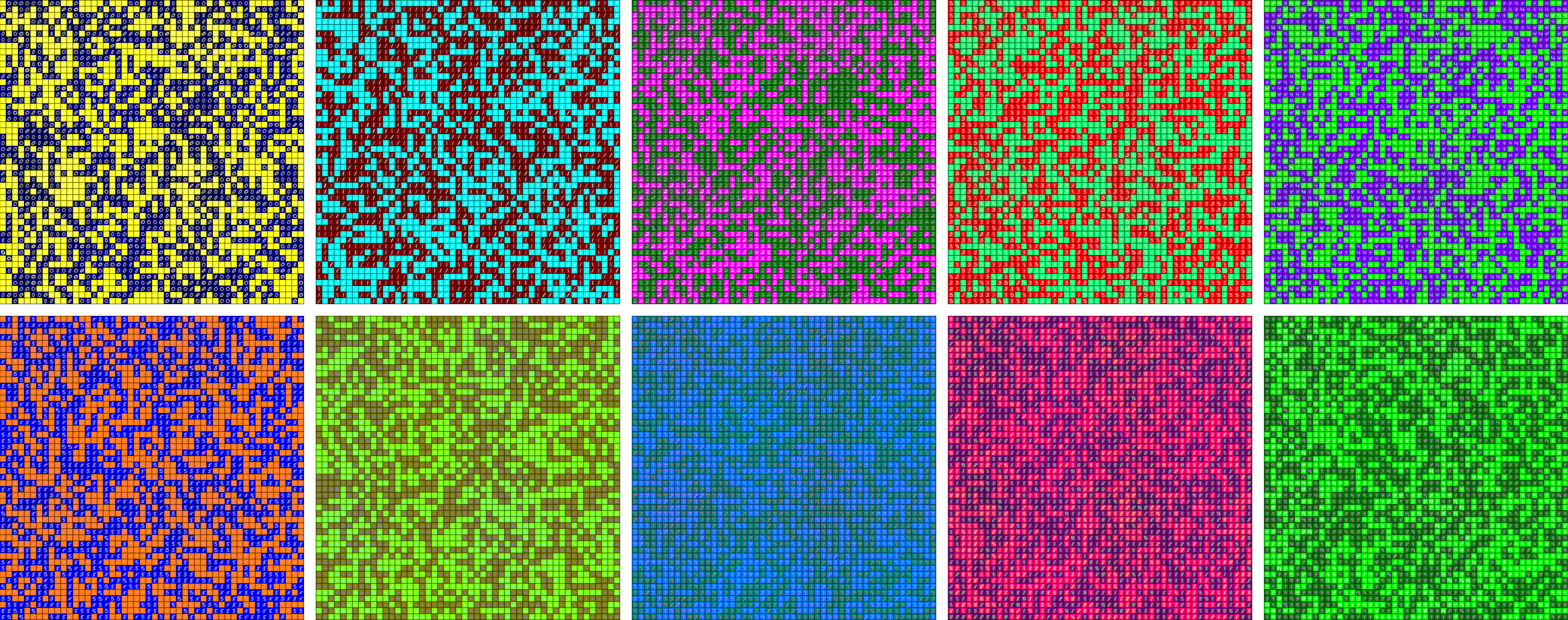}
  \vspace{-5mm}
  \caption{The background color for each class without intervention in the observational VAE model. The generator \emph{fails} to generate digits with different background color from the training set, which demonstrates the importance of intervention.}
  \label{fig:vae-grid-ob}
\end{figure*}

\begin{figure*}
  \centering
  \includegraphics[width=1.0\textwidth]{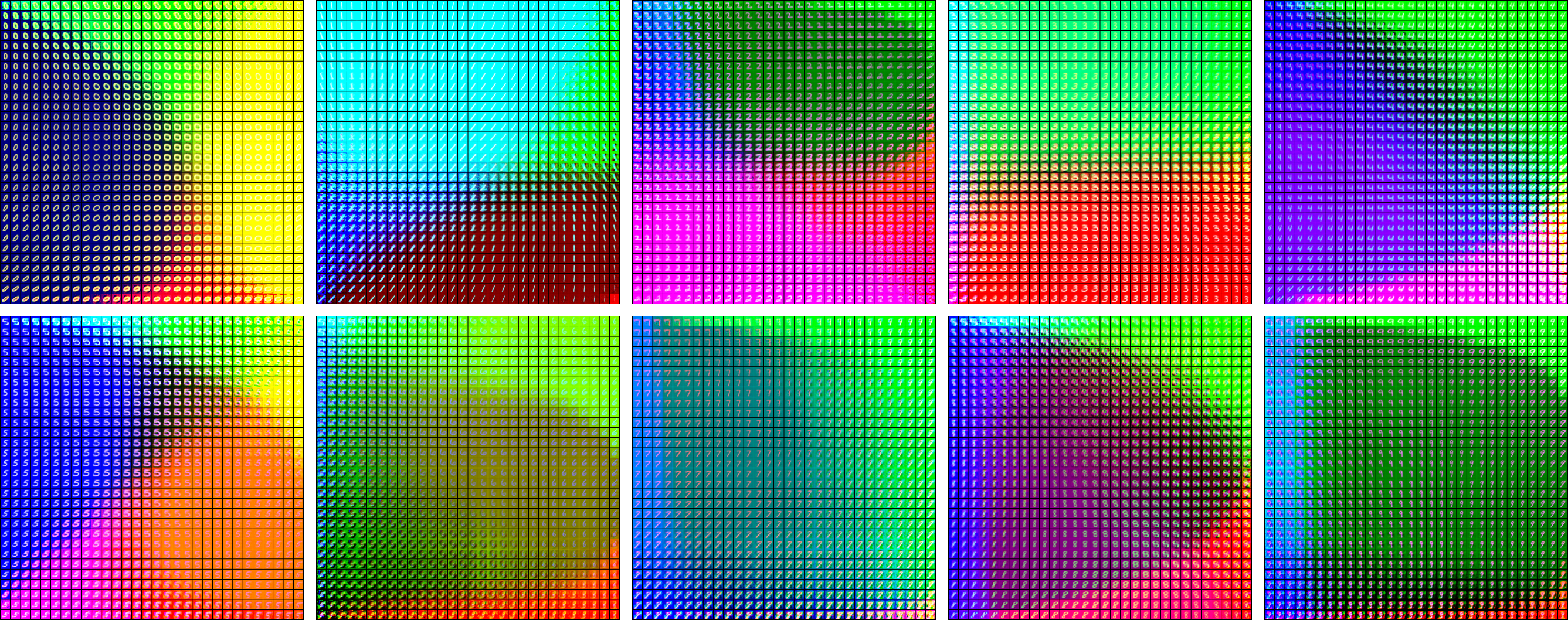}
  \vspace{-5mm}
  \caption{The background color for each class in interventional CVAE. We intervene on two principal component directions in the latent space. Despite the dataset being created with only 2 colors per category, new background colors emerge in the generative model after interventions. This demonstrates the importance of intervening on generative models for creating unbiased data.}

  \label{fig:vae-grid}
  \vspace{-5mm}
\end{figure*}

\begin{figure}
  \centering
  \includegraphics[width=0.4\textwidth]{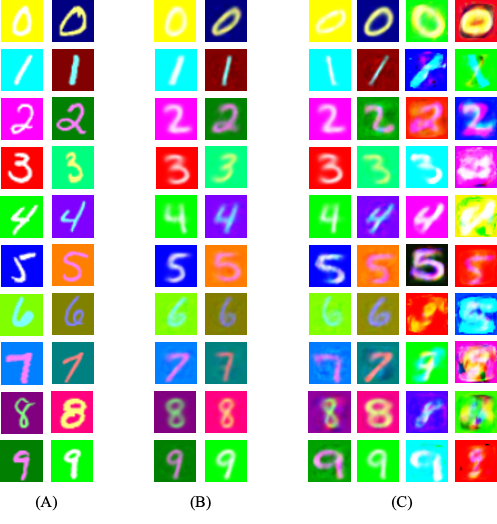}
  \caption{Comparison of the background color for each class for the original dataset (A), the observational CVAE (B), and the interventional CVAE (C). New colors emerge after intervening on the generative model. The background color is randomized after interventions, so that it no long spuriously correlates with the target label.}
%   \cv{This looks very nice. Can you make the images larger? Also, the spacing is not consistent.}
  \label{fig:compare-mnist}
\end{figure}

\subsection{Controlled Example: Colored MNIST}

\textbf{Dataset:} We analyze our model on MNIST, which allows us to carefully control experiments. We use \emph{Colored MNIST}~\cite{arjovsky2019invariant} where we explicitly introduce a confounding bias $c$ that affects the background color $b$ for the training set. This confounding bias does not exist in the test set. We set two different background colors for each digit category $y_i$, $i = 0 \dots 9$. While the handwritten digit is the true cause, the background color is spuriously correlated to the target. A classifier that makes predictions based on the spurious background correlation will have $P(y_i|b) \neq \frac{1}{10}$, while the causal classifier learns to be invariant to background-color $P(y_1|do(b)) = P(y_2|do(b)) = \frac{1}{10} $. We show training examples for our manipulated colored MNIST in Figure \ref{fig:train-color}. By training models under Empirical Risk Minimization, the model will learn the spurious correlations instead of the true causal relationship, thus not generalizing well once the spurious correlations change.

\textbf{Experimental Setup:} We validate this outcome by experiment. The baseline is trained only on the original colored data. For methods involving a generator, we train a conditional variational auto-encoder (CVAE) \cite{CVAE} on the observed joint distribution. Observational CVAE denotes the classifier only trained on data sampled from an original CVAE without intervention, corresponding to the `GAN Augmentation' method in our main paper. Our proposed method is labeled Interventional CVAE, corresponding to `GenInt' in our main paper, where we train classifiers on data generated by intervening on the two principal directions in latent space. The intervention scale is uniformly sampled from a given interval to cut off the dependency on the category. Since the generative model captures the MNIST joint distribution well, we use only the generative interventional data without the original data.

\subsection{Results}

\textbf{Visualization Results:} In Figure \ref{fig:vae-grid-ob}, we visualize data generated by observational CVAE. As we can see, the color produced by observational CVAE is the same as the training set---no new color emerges---which is due to observational CVAE capturing the same joint distribution as the training set.
In Figure \ref{fig:vae-grid}, we visualize what happens to the generated image background once we intervene on two major directions in the generative models. New colors emerge due to our intervention, thus we can randomize the features affected by the confounding bias through proper intervention. Figure \ref{fig:compare-mnist} also demonstrates the difference between the original dataset, the observational VAE, and the interventional VAE.

\textbf{Invariant Risk Minimization:} Recent work \cite{arjovsky2019invariant} proposes an Invariant Risk Minimization(IRM) algorithm to estimate invariant correlations across multiple training distributions, which is related to causal learning and enables out-of-distribution generalization. IRM algorithm is shown to work on binary colored MNIST classification. We implement IRM on our more challenging 10-way colored MNIST dataset. Table \ref{tab:colorfulmnist} shows that our algorithm bypasses it by a big margin on the ``Causal Test Accuracy,'' which demonstrates the effectiveness of our algorithm for causal learning in confounded data.

\textbf{Quantative Results: } Table \ref{tab:colorfulmnist} shows that the baseline model performs well when the confounding bias persists in the test set, but catastrophically fails (worse than chance) once the spurious colors are changed. The CVAE suffers from the same issue, demonstrating that data augmentation with a CVAE is insufficient to learn robust models. The state-of-the-art solution on colorful MNIST is the IRM, where the classifier is optimized across several different environments. Our method, Interventional CVAE, doubles the accuracy on the causal test set without substantial decreases on the confounded test set, we also advance the state-of-the-art IRM method by more than 10\%. The results show that using generative interventions, our approach can learn causal representations more effectively than non-interventional methods.

% We construct Colored MNIST by manipulating the background and foreground color of the digit manually.
%  We show that our method helps learn a model that is more consistent with the underlying causal relationship.

\begin{figure*}[t]
  \centering
  \includegraphics[width=0.8\textwidth]{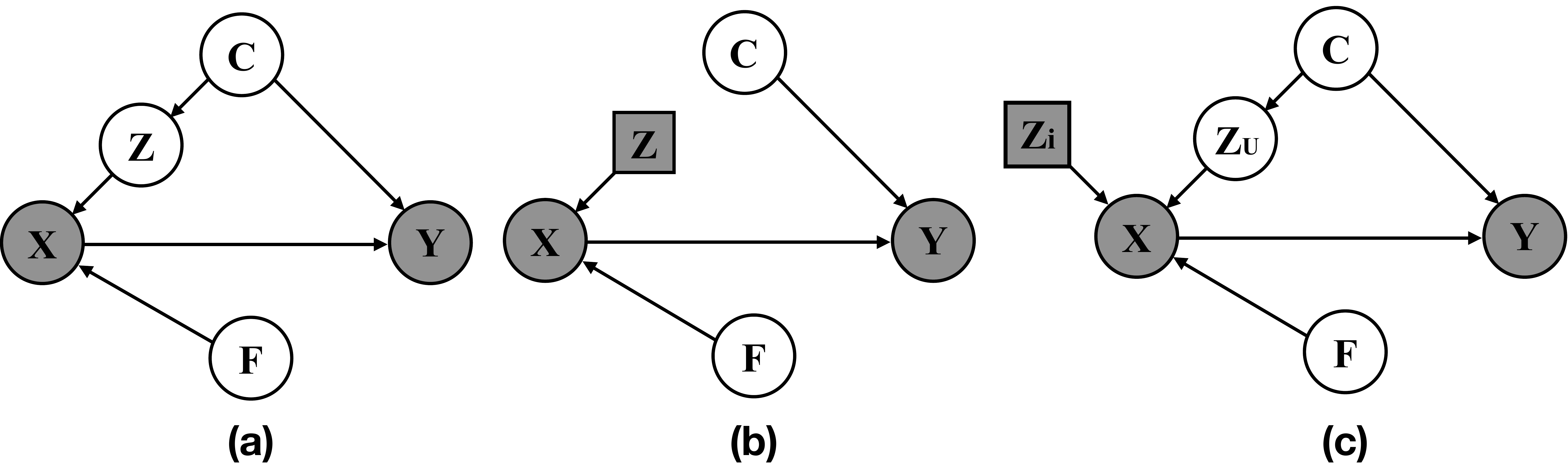}
%   \vspace{-5mm}
  \caption{Causal graph for image classification. Gray variables are observed. (\textbf{a}) $F$ is the unobserved variable that generates the object features. The unobserved confounder $C$ causes both the  background features $Z$ and label $Y$, which creates a spurious correlation between the image $X$ and label $Y$. (\textbf{b}) An ideal intervention blocks the backdoor path from $Z$ to $C$, which produces causal models. (\textbf{c}) In practice, we cannot guarantee to intervene on all the $Z$ variables. However, by properly intervening on even a small set of nuisance factors $Z_i$, the confounding bias of the observed distribution is mitigated, which is theoretically proven by our theorem.}

  \label{fig:causalgraph}
\end{figure*}

\section{Proof for Theoretical Analysis}

We formalize the framework with Structural Causal Models (SCMs) (Pearl, 2000, pp. 204-207). An SCM contains $<U,V,\phi,P(U)>$, where $U$ is a set of unobserved variables and $V$ is the observed variables, $\phi$ is the set of dependence functions, and $P(U)$ encodes the uncertainty of the exogenous variables. In our paper, $V=\{X,Y\}$, $U=\{C,F,Z, U_x, U_y\}$, we do not plot $ U_x, U_y$ on the causal graph explicitly. We assume that $ U_x, U_y$ are exogenous variables that capture the uncertainty of variables $X$ and $Y$, respectively.  $C$ is the unobserved confounding bias, which causes the object image $X$ and its corresponding label $Y$.  We validate the existence of $C$ via the correlations analysis experiment in Section 3.1. We assume $P(U)$ satisfies the Gaussian distribution, but it can also be any other distribution in our theory. We plot our causal graph in Figure \ref{fig:causalgraph}
.
The functional relationship $\phi$ between the variables are as follows:

\begin{align*}
    & Z := \phi_z (C) \\
    & X := \phi_x (U_x, F, Z) \\
    & Y := \phi_y (U_y, X, C)
\end{align*}

% \noindent where the corresponding causal graph is shown in Figure 4 in the main paper.

Following \cite{pearl}, we define the causal effect of variable $X$ on $Y$ as follows:

\begin{definition}
The causality of variable $X$ on $Y$, denoted as $P(Y|do(X))$, is the effect of conducting $X$ to $Y$ while keeping all the other variables the same.
\end{definition}

Note that $P(Y|do(X))$ is different from $P(Y|X)$. Since $P(Y|X)$ is the observational distribution, a change in $X$ can suggest a change in unobserved confounding bias under our causal graph. It is possible that the observed change in $X$ is due to changes in confounding variables.
% Given that unobserved confounding bias's change can be the reason for observing change in $X$,  the other variables may not be the same. 
The observed $P(Y|X)$ thus is not the same as $P(Y|do(X))$. To identify the causal effect of $P(Y|do(X))$, we need to observe the confounding bias, or intervene on $X$.

In Figure 1 in the paper, by observing variables that block the back-doors from $X$ to $Y$, we can learn the causal effect from $X$ to $Y$.

\begin{theorem} 
One can identify the causal effect of $X$ on outcome $Y$ by observing the hidden factors $C$ or $Z$.
\end{theorem}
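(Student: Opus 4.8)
The plan is to recognize this as a direct instance of Pearl's \emph{backdoor criterion} applied to the structural causal model specified by $\phi_z,\phi_x,\phi_y$. First I would read off the edges induced by these structural equations: $C \to Z$, $C \to Y$, $Z \to X$, $F \to X$, and $X \to Y$. The goal is to exhibit an \emph{observed} set $S$ that (i) contains no descendant of $X$ and (ii) blocks every backdoor path from $X$ to $Y$; the backdoor adjustment formula then yields $P(Y|do(X)) = \sum_s P(Y \mid X, s)\,P(s)$, which is computable from the joint distribution of $X$, $Y$, and $S$, establishing identifiability.

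Second, I would enumerate the undirected paths between $X$ and $Y$ to isolate the backdoor paths (those with an arrow pointing into $X$). The only such path is $X \leftarrow Z \leftarrow C \to Y$. The remaining edge into $X$ is $F \to X$, but since $F$ appears only in $\phi_x$ its sole edge is $F \to X$; the walk $X \leftarrow F$ therefore terminates at $F$ and cannot reach $Y$, so it opens no backdoor. This confirms $X \leftarrow Z \leftarrow C \to Y$ is the unique source of confounding.

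Third, I would verify each claimed set against the two conditions. For $S=\{C\}$: $C$ is an ancestor, hence not a descendant, of $X$, and $C$ is a non-collider on the path $X \leftarrow Z \leftarrow C \to Y$, so conditioning on $C$ d-separates $X$ and $Y$ along that path; backdoor adjustment gives $P(Y|do(X)) = \sum_c P(Y \mid X, c)\,P(c)$, which matches the expression already used in the proof of Theorem~\ref{th:2IV}. For $S=\{Z\}$: $Z$ is a parent of $X$, again not a descendant, and $Z$ is likewise a non-collider on the same path, so conditioning on $Z$ blocks it, giving $P(Y|do(X)) = \sum_z P(Y \mid X, z)\,P(z)$. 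In both cases $S$ is observed, so the effect is identified.

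I expect the only subtlety — rather than a genuine obstacle — to be confirming that conditioning on $S$ does not inadvertently \emph{open} a collider path while closing the intended one. One checks that $Z$ is not a collider on any $X$--$Y$ path and that, because there is no $F \to Y$ edge (the text notes $Y \independent F \mid X$) and $F$ is a leaf-parent, the edge $F \to X$ creates no alternative route to $Y$. With these checks the d-separation is exact and the argument is complete.
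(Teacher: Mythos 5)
Your proposal is correct and takes essentially the same route as the paper: both identify $X \leftarrow Z \leftarrow C \to Y$ as the sole confounding path and conclude with the backdoor adjustment formulas $P(Y|do(X))=\sum_c P(Y|X,c)P(c)$ and $\sum_z P(Y|X,z)P(z)$. The only difference is presentational --- you verify the backdoor criterion by d-separation directly, whereas the paper derives the same formulas algebraically starting from adjustment over the parents of $X$ --- and your explicit check that conditioning opens no collider path is if anything a cleaner justification.
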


\begin{proof}
We denote the parent node of variable $X$ in the causal DAG graph as $pa_x$.

Then for $C$:
\begin{align*}
    P(y|do(x)) & = \sum_{pa_x} P(y|x, pa_x)P(pa_x) \\
               & = \sum_{pa_x,c} P(y|x, pa_x, c)p(e,c|x,pa_x)P(pa_x) \\
               & = \sum_{pa_x,c} P(y|x, c) P(e,c|pa_x)P(pa_x) \\
               & = \sum_{c} P(y|x, c) P(c)
\end{align*}

Thus one can identify the causal effect of $X$ on $Y$ by observing $C$.

For $Z$:
\begin{align*}
    P(y|do(x)) & = \sum_{pa_x} P(y|x, pa_x)P(pa_x) \\
              & = \sum_{pa_x,z} P(y|x, pa_x, z)p(z|x,pa_x)P(pa_x) \\
              & = \sum_{pa_z} P(y|x, z) P(z|pa_x)P(pa_x) \\
              & = \sum_{z} P(y|x, z) P(z)
\end{align*}

Thus one can identify the causal effect of $X$ on $Y$ by observing $Z$.

\end{proof}

Given only the observational data, it is often impossible to identify the exact causal effect of data. Instead, \cite{pearl} proposed the natural bound for the causal effect to restrict the possible causal effect in a range.

\begin{lemma}
Given the observed joint distribution, the natural bound for $P(Y|do(x))$ is bounded by $[P(X,Y), P(X,Y)+1-P(X)]$.
\end{lemma}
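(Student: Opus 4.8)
The plan is to recover Pearl's natural bound as the intervention-free special case of the backdoor decomposition already established above, specializing the adjustment to the confounder $C$. First I would invoke the backdoor criterion proved in the preceding theorem, which gives $P(y\mid do(x)) = \sum_c P(y\mid x,c)P(c)$, since $C$ blocks every backdoor path from $X$ to $Y$. This rewrites the interventional quantity purely in terms of observationally estimable conditionals, at the cost of an unevaluable sum over the unobserved $c$ and the unobserved marginal $P(c)$.

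Next I would split the marginal $P(c)$ into the mass compatible with the observed event $X=x$ and its complement, writing $P(c) = P(x,c) + \bigl(P(c)-P(x,c)\bigr)$ and substituting to obtain
\[
P(y\mid do(x)) = \sum_c P(y\mid x,c)\,P(x,c) \;+\; \sum_c P(y\mid x,c)\,\bigl(P(c)-P(x,c)\bigr).
\]
The first sum telescopes to the observed joint: using $P(y\mid x,c)P(x,c)=P(y\mid x,c)P(x\mid c)P(c)=P(y,x,c)$ and marginalizing over $c$ gives $\sum_c P(y,x,c)=P(x,y)$. Thus the full expression is $P(x,y)$ plus a residual term, and since $P(y\mid x,c)\ge 0$ and $P(c)-P(x,c)=P(X\neq x,c)\ge 0$, that residual is nonnegative, which immediately yields the lower bound $P(y\mid do(x))\ge P(x,y)$.

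For the upper bound I would bound the same residual $\sum_c P(y\mid x,c)\,\bigl(P(c)-P(x,c)\bigr)$ from above using $P(y\mid x,c)\le 1$, so that it is at most $\sum_c \bigl(P(c)-P(x,c)\bigr) = 1-\sum_c P(x,c) = 1-P(x)$. Combining the two directions gives $P(x,y)\le P(y\mid do(x))\le P(x,y)+1-P(x)$, i.e.\ the stated interval $[P(X,Y),\,P(X,Y)+1-P(X)]$, recovering the classical bound of Pearl cited earlier.

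I expect the only delicate point to be making the split of $P(c)$ and the telescoping of the first sum fully rigorous — in particular, verifying that $\sum_c P(y\mid x,c)P(x,c)$ collapses to the genuine observed joint $P(x,y)$ rather than some reweighted version. This hinges precisely on $C$ satisfying the backdoor criterion, so that the conditional $P(y\mid x,c)$ used in the adjustment coincides with the observational conditional. Everything else reduces to the elementary fact that a probability lies in $[0,1]$, so the argument is identical in structure to the main paper's effective-intervention proof with the intervention set removed.
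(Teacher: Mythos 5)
Your proposal is correct and follows essentially the same route as the paper's own proof: both apply the adjustment formula $P(y|do(x))=\sum P(y|x,\cdot)P(\cdot)$ over the unobserved confounders, split $P(\cdot)$ as $P(\cdot,x)+\bigl(P(\cdot)-P(\cdot,x)\bigr)$ so the first sum telescopes to $P(x,y)$, and bound the residual using $0\le P(y|x,\cdot)\le 1$. The only cosmetic difference is that the paper sums over a generic set of unobserved factors $u$ rather than the specific confounder $c$, and your worry about the telescoping step is unfounded since $P(y|x,c)P(x,c)=P(x,y,c)$ is the chain rule and needs no backdoor assumption.
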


\begin{proof}
We denote all the unobserved confounding factors as $u$.

\begin{align*}
    P(y|do(x)) & = \sum_u P(y|x,u)P(u) \\
    & = \sum_u P(y|x,u)(P(u,x)+P(u)-P(u,x)) \\
    & = \sum_u P(x,y,u) + \sum_u P(y|x,u)(P(u)-P(u,x)) 
\end{align*}

Since $0 \leq P(y|x,u) \leq 1$,

\begin{align}
    P(y|do(x)) & \geq \sum_u P(x,y,u) = P(x,y) \\
    P(y|do(x)) & \leq  \sum_u ( P(x,y,u) + (P(u)-P(u,x))) \\
    &= P(x,y) + (1 -P(x) )
\end{align}
This illustrates the proof for natural bound. 
\end{proof}

Ideally, we desire a tighter causal bound. As we can see in Figure \ref{fig:bound}, a tighter bound reduces the overlap of bound intervals, and creates a margin between the intervals of the probability of predicting different objects. If we can separate the interval of the maximum category from the others, we can predict the causal results even under bounded causal effect.

Although observing all the confounding factors is impossible for most vision tasks, there are some confounding factors that can be captured by generative models. We assume that we can intervene on a subset of the features caused by the confounders. We can tighten the causal bound using our intervention using generative models, which helps to learn models that are more consistent with the true causal effect. We propose the following theorem in the main paper:

\begin{theorem}\label{th:2IV}
We denote the images as $x$. The causal bound under intervention $z_i$ is thus $P(y,\x|z_i) \leq P(y|do(x)) \leq P(y,x|z_i) + 1 - P(x|z_i)$. For two intervention strategies $z_1$ and $z_2$,  $z_1 \subset z, z_2 \subset z$, if  $P(x|z_1) > P(x|z_2)$, then $z_1$ is more effective for causal identification. 
%Given two strategies that can intervene transformation variable subset $z_1$ and $z_2$, respectively, $z_1 \subset z, z_2 \subset z$, if $P(x|z_1) > P(x|z_2)$, then $z_1$ is more effective for causal identification.
% Assume the independence $P(y|x)$ does not change, g
% shorten the interval length of the identification bound; if $P(x,y|z_1) = P(x,y|z_2) + \delta_2$, $\delta_1 > \delta_2 > 0$, then $z_1$

\end{theorem}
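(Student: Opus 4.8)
The plan is to first derive the interval $[\,P(y,x\mid z_i),\; P(y,x\mid z_i)+1-P(x\mid z_i)\,]$ for a single intervention $z_i$, and then show that a larger $P(x\mid z_i)$ produces a nested, hence tighter, interval. For the single-intervention bound I would work in the post-intervention graph of Figure~\ref{fig:causalgraph}(c): intervening on $z_i$ severs every edge entering $Z_i$, which simultaneously yields $Z_i \independent C$ (so that $P(c\mid z_i)=P(c)$) and $Z_i \independent Y \mid X$. With $C$ still blocking the remaining backdoor from $X$ to $Y$, the adjustment formula gives $P(y\mid do(x)) = \sum_c P(y\mid x,z_i,c)\,P(c)$. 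I would then add and subtract $\sum_c P(y,x,c\mid z_i)$ to rewrite this as $\sum_c P(y,x,c\mid z_i) + \sum_c P(y\mid x,z_i,c)\bigl(P(c)-P(x,c\mid z_i)\bigr)$. Because $Z_i \independent C$ gives $P(x,c\mid z_i)=P(c)\,P(x\mid c,z_i)\le P(c)$, each bracket is nonnegative, and since $P(y\mid x,z_i,c)\in[0,1]$ the second sum lies between $0$ and $\sum_c\bigl(P(c)-P(x,c\mid z_i)\bigr)=1-P(x\mid z_i)$. This reproduces the claimed lower and upper endpoints.

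For the comparison I would set $\delta_1 := P(x\mid z_1)-P(x\mid z_2)>0$, which holds by hypothesis. The decisive simplification is again the post-intervention independence $Z_i \independent Y\mid X$, which factorizes $P(x,y\mid z_i)=P(y\mid x,z_i)\,P(x\mid z_i)=P(y\mid x)\,P(x\mid z_i)$. Hence the gap between the two lower endpoints is $\delta_2 := P(x,y\mid z_1)-P(x,y\mid z_2)=\delta_1\,P(y\mid x)$, and since $0<P(y\mid x)<1$ we obtain $0<\delta_2<\delta_1$.

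It then remains to verify the two containment inequalities. The lower endpoint shifts up by $\delta_2>0$, so $P(y,x\mid z_1)\ge P(y,x\mid z_2)$; for the upper endpoints, their difference is $\bigl(P(y,x\mid z_1)-P(x\mid z_1)\bigr)-\bigl(P(y,x\mid z_2)-P(x\mid z_2)\bigr)=\delta_2-\delta_1<0$, so the $z_1$ upper limit lies strictly below the $z_2$ one. Together these give the nesting $[\,P(y,x\mid z_1),\,P(y,x\mid z_1)+1-P(x\mid z_1)\,]\subset[\,P(y,x\mid z_2),\,P(y,x\mid z_2)+1-P(x\mid z_2)\,]$, which is precisely the statement that $z_1$ tightens the causal-effect bound and is therefore more effective.

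The step I expect to require the most care is establishing the independence structure itself as a genuine consequence of the intervention on the structural causal model, rather than merely asserting it: the entire argument rests on $Z_i \independent Y\mid X$ (for the factorization) and $Z_i \independent C$ (for both the $P(c\mid z_i)=P(c)$ substitution and the nonnegativity of each bracket), and one must keep careful track of which distributions are interventional ($do(z_i)$) versus observational. I would also want the strict inequalities $0<P(y\mid x)<1$ to hold throughout so that the containment is strict rather than merely non-strict.
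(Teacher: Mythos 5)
Your proposal follows essentially the same route as the paper's proof: the backdoor adjustment $\sum_c P(y\mid x,z_i,c)P(c)$, the add-and-subtract of $\sum_c P(y,x,c\mid z_i)$, and the $\delta_1,\delta_2$ comparison via the post-intervention factorization $P(x,y\mid z_i)=P(y\mid x)P(x\mid z_i)$. Your explicit check that each bracket $P(c)-P(x,c\mid z_i)$ is nonnegative (via $Z_i\independent C$) is a small but welcome addition that the paper leaves implicit.
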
\vspace{-1em}\begin{proof}
% We denote all the unobserved confounding variables as $u$.
Figure 1b shows the causal graph after intervention where $Z= Z_i, Z_i \indep Y$.  We follow causal calculus rules by Pearl \cite{pearl} and add and remove the same term $\sum_c P(y,x,c|z_i)$:
% \begin{align*}
% P(y|do(x)) &= P(y|do(x), do(z)) = P(y|do(x), z) = \sum_c P(y|x,z,c)P(c)\\   % \sum_u P(y,x,u|z)
% &= \sum_c P(y,x,c|z) + \sum_c{P(y|x,z,c)(P(c)-P(x,c|z))}
% \end{align*}
\begin{align*}
P(y|do(x)) &= \sum_c P(y|x,z_i,c)P(c) \\
& = \sum_c P(y,x,c|z_i) + \\ 
& \sum_c{P(y|x,z_i,c)(P(c)-P(x,c|z_i))}
\end{align*}
Since $0 \leq P(y|x,z_i,c) \leq 1$, we have the lower and upper bounds.
% \begin{align*}
% P(y|do(x)) &\leq \sum_c P(y,x,c|z) + \sum_c(P(c)-P(x,c|z)) = P(y,x|z) + 1 - P(x|z)\\
% P(y|do(x)) &\geq \sum_c P(y,x,c|z) = P(y,x|z)
% \end{align*}
% \begin{align*}
% P(y,x|z_i) \leq P(y|do(x)) \leq P(y,x|z_i) + 1 - P(x|z_i)
% \end{align*}

% For the chosen interventional variable $z_1$ and $z_2$, the identifiable bound is $I_1 = [P(y,x|z_1), P(y,x|z_1) + 1 - P(x|z_1)]$ and $I_2 = [P(y,x|z_2), P(y,x|z_2) + 1 - P(x|z_2)]$.

We denote $\delta_1 = P(x|z_1) - P(x|z_2)$, thus  $\delta_1>0$. Since we intervene on $z_i$ in the causal graph of Figure \ref{scm2}, all incoming edges  to $z_i$ are removed. Therefore, $z_i \indep y | x$ and $P(x,y|z_i) = P(y|x,z_i)P(x|z_i)= P(y|x)P(x|z_i)$. Next, let $\delta_2=P(x,y|z_1)-P(x,y|z_2)$, then $\delta_2 = \delta_1 \cdot P(y|x)$. Since $0<P(y|x)<1$, then $0<\delta_2<\delta_1$. Thus we obtain $[P(y,x|z_1), P(y,x|z_1) + 1 - P(x|z_1)] \subset [P(y,x|z_2), P(y,x|z_2) + 1 - P(x|z_2)]$, which means the intervention $z_1$ results in a tighter causal effect bound.
\end{proof}

\begin{figure*}
  \centering
  \includegraphics[width=1.0\textwidth]{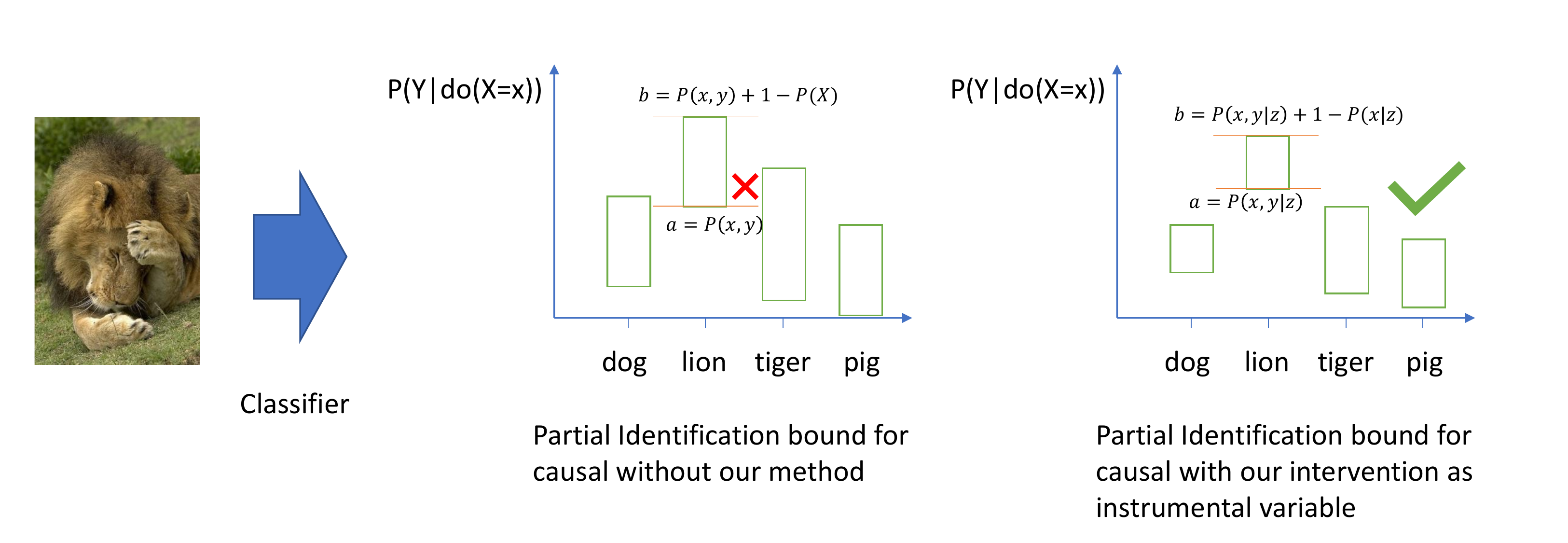}  
  \vspace{-5mm}
  \caption{Tightened identification bound reduces the overlap on the bound for probabilities of the predicted labels, which results in correct recognition when the bounds do not overlap. }
%   \cv{This looks very nice. Can you make the images larger? Also, the spacing is not consistent.}
  \label{fig:bound}
\end{figure*}

Though deep learning methods are non-linear, to provide insights for the effect of intervening on a subset of variables, we analyze an example under a linear model. The following theorem \ref{th:linearIV} shows that, if the models are linear, then one can estimate the causal effect with intervened $Z$.

\begin{theorem}\label{th:linearIV}
If all the variables follow a Gaussian distribution, and the functional relationships between the variables are linear, then one can identify the causal effect by observing $Z$ in Figure 4c, even though there remain unobserved confounding factors.
\end{theorem}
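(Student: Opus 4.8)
The plan is to read Figure 4c as an instrumental-variable configuration and exploit the linear-Gaussian structure to invert the covariance relations. Intervening on the nuisance $Z$ severs the edge $C \to Z$, so $Z$ becomes exogenous; however the confounder $C$ still reaches $X$ through the untouched nuisances and reaches $Y$ through its direct edge $C \to Y$, so $X$ and $Y$ remain confounded and no observed back-door set exists. The key observation is that the intervened $Z$ now qualifies as a valid instrument: it is relevant ($Z \to X$), it is independent of the confounder ($Z \indep C$ after the do-operation), and it obeys the exclusion restriction (every directed path from $Z$ to $Y$ passes through $X$).

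First I would write the post-intervention structural equations in linear form, taking all variables zero-mean without loss of generality and folding the exogenous object cause $F$ and the untouched-nuisance channel into the coefficients and noises:
\begin{align*}
X &= \alpha\,Z + \eta\,C + U_x, \\
Y &= \beta\,X + \gamma\,C + U_y,
\end{align*}
where $\alpha$ encodes relevance, $\eta$ the residual confounding that survives the partial intervention, $\beta$ the target causal effect, and $\gamma$ the direct confounding on $Y$; the do-operation supplies $Z \indep C$, $Z \indep U_x$, and $Z \indep U_y$.

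Next I would form the two instrument covariances. Because $Z$ is independent of $C$ and of both noises, $\cov(Z, X) = \alpha\,\mathrm{Var}(Z)$, while the confounding term drops out of $\cov(Z, Y) = \beta\,\cov(Z, X) + \gamma\,\cov(Z, C) + \cov(Z, U_y) = \beta\,\cov(Z, X)$. Dividing gives the identified effect
\begin{align*}
\beta = \frac{\cov(Z, Y)}{\cov(Z, X)},
\end{align*}
which depends only on the observable joint law of $(Z, X, Y)$. Since the model is Gaussian, $P(Y \mid do(X=x))$ is itself Gaussian with mean $\beta x$ and a residual variance recoverable from the fitted residuals, so pinning down $\beta$ identifies the entire interventional distribution, establishing the claim.

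The hard part is not the covariance algebra, which is routine, but justifying instrument validity directly from Figure 4c: I must argue that the do-operation on $Z$ removes exactly the incoming edge $C \to Z$ (yielding $Z \indep C$) while preserving the outgoing edge $Z \to X$ (yielding relevance), and that there is no direct arrow $Z \to Y$ nor any other path from $Z$ to $Y$ that bypasses $X$ (the exclusion restriction). A secondary point to flag is the relevance assumption $\alpha \neq 0$: if the intervention fails to move $X$ then $\cov(Z, X) = 0$ and the ratio is undefined, so the theorem implicitly requires a non-degenerate intervention.
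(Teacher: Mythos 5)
Your proposal is correct and takes essentially the same route as the paper: the paper's proof also treats the intervened $Z$ as an instrument, regressing $Y$ on $Z$ to get $\sigma_{zy}=a_5 b$ and $X$ on $Z$ to get $\sigma_{zx}=a_5$, then identifying $b=\sigma_{zy}/\sigma_{zx}$, which is exactly your covariance ratio $\cov(Z,Y)/\cov(Z,X)$ up to the common $\mathrm{Var}(Z)$ factor. Your write-up is in fact slightly more careful than the paper's, since you make explicit the relevance condition $\alpha\neq 0$ (the paper implicitly assumes $a_5\neq 0$) and the exclusion restriction that the paper justifies only via a brief collider remark.
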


\begin{proof}
Based on the linear assumption,
Let $Y=a_1 C + b X + u_y$,  $X=a_4 F + a_5 Z_i + a_6 Z_U + u_x$.
Our goal is to estimate $P(Y|do(X))$ which we denote as $b=P(Y|do(X))$. The causal graph is shown in Figure 2b in the main paper.

Under a linear model, we conduct linear regression to estimate the coefficient $\sigma_{zy}$ between $Z_i$ and $Y$.

\[\sigma_{zy} = a_5 b\]

The confounding factors do not appear in this regression, because X is an unobserved collider in the regression, thus information cannot go to the confounders.

Then we conduct linear regression to estimate the coefficient $\sigma_{zx}$ between $Z$ and $X$.

\[\sigma_{zx} = a_5\]

where the path from $z$ to $x$ is the causal path.

Thus we can estimate the causal effect $P(Y|do(X))=\frac{\sigma_{zy}}{\sigma_{zx}}=b$ under linear model.
\end{proof}

This linear example shows that by intervening only on a subset of the confounding factors $z$, one can identify the causal effect from $x$ to $y$ under unobserved confounders. Compared with Section 3.1.1 in \cite{ilse2020designing}, we show that even under a much weaker assumption where we can intervene on only a subset of nuisance factors, we can learn causal effect in linear models. This linear example also motivates the reason for effectiveness of our method in a non-linear setting.  

\begin{figure}[t]
  \centering
  \includegraphics[width=0.25\textwidth]{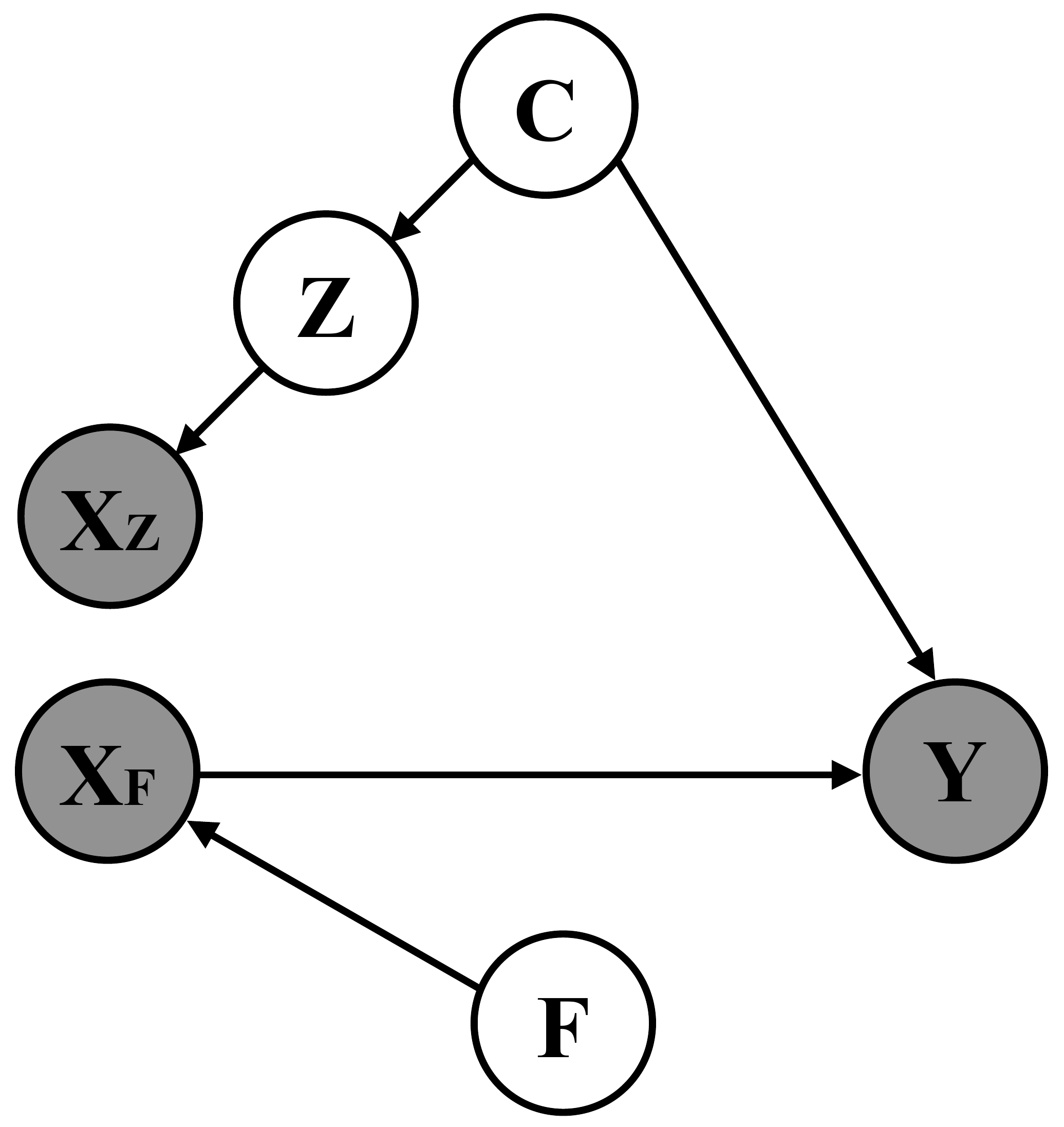}
  \caption{If we can disentangle the nuisances $X_Z$ and causal feature $X_F$ in the images, we get the above causal graph. We can identify causality by directly predicting $Y$ from $X_F$. We can measure the spurious correlations by predicting $Y$ from $X_Z$. However, disentangling $X$ into the nuisances $X_Z$ and causal feature $X_F$ is still an open challenge.}

  \label{fig:disentangled-graph}
\end{figure}

In our main paper, we mentioned that "$Z$ produces the nuisance features $X_Z$ in images. There is another variable $F$ that generates the core object features $X_F$, which together with $X_Z$ constructs the pixels of a natural image $X$. There is no direct arrow from $F$ to $Y$ since $Y \indep F | X$, i.e., image $X$ contains all the features for predicting $Y$. We can observe $X$ but not $X_Z$ or $Z_F$ separately. '' We show the corresponding causal graph in Figure \ref{fig:disentangled-graph}. If we can disentangle $X_Z$ and $X_F$, we can estimate the causal effect by predicting $Y$ from $X_F$. However, given that the image pixels we observe are combinations of $X_F$ and $X_Z$, we essentially use the causal graph in Figure \ref{fig:causalgraph}.

\section{Calculating $P(x|z)$ For Causal Effect Bound}

\subsection{Causal Bound and Performance}
In Section 5.4, we empirically calculate the $P(x|z)$ for our causal effect bound. We describe the implementation details here.
Following Theorem \ref{th:2IV}, we measure the effectiveness of our intervention using $\log P(x|z)$ given that $\log$ is a monotonic function, which will not change the relative ranking of different $P(x|z)$. Our theorem suggests that a larger $\log P(x|z)$ results in a more effective intervention strategy for causal identification. To estimate $\log P(x|z)$, we sample $x$ from the ImageNet validation set, which is a widely used benchmark and approximates the true unknown distribution. Other open-world data can also be used here in future work. 

We estimate the likelihood to query validation set of images $x$ given our intervention $z$.  We denote the $i$-th query image from the validation set as $x_i$. The log likelihood of producing the test set data is $\log P(x|z) = \sum_{i}\log P(x_i|z) = \sum_{i}\sum_{x'_j} \log (P(x_i|x'_j)P(x'_j|z))$, where $x'$ is the data we generate in the training set. $P(x_i|x'_j)$ is calculated through the cross-entropy loss of the CNN features after  normalization. Since the generator is a deterministic network given the intervention, we have $P(x'_j|z)=1$. Thus $P(x_i|z) = \sum_{x'_j} P(x_i|x'_j)$.  We approximate the marginalization of $x'_j$ from generated dataset which is nearest to the query image. A larger value of $\log P(x|z)$ indicates the generated image under intervention $z$ is closer to the real data distribution, which according to Theorem \ref{th:2IV}, tightens the causal identification bound.

We randomly sample half of ObjectNet and ImageNet overlapping categories to calculate $\log P(x|z)$. For each category, we generate 1000 images for $x'_j$  and use all the 50 validation data for $x_i$. We use a pre-trained ResNet18 model for extracting the CNN features, where we use the features from the penultimate layer.

We first study the impact of $\log P(x|z)$ on the actual classifier generalization ability. We change the value of $\log P(x|z)$ using different interventional strengths. We set truncation value to be 0.5 for the experiments. 
% For None interventional data, we just use the original data sampled from GAN. For weak intervention, we choose the top 20 PCA component with intervention scale 3. For medium intrevention, we choose the top 40 PCA with intervention scale 5. For strong intervention, we use the top 40 PCA direction with intervention 7.
We vary the strength of the intervention by controlling the scale of the randomness. 
For the listed datasets, we intervene with a different intervention scale and sample 2000 images for 1000 ImageNet categories each. Thus the intervention strength is controlled by how much we randomize the generation process: (1) For 'None' interventional data, we sample from the observational GAN. (2) For weak interventional data, for each category we sample 50 random seeds, and for each seed, we intervene on 4 randomly selected PCA directions from the top 20 PCA components, and generate 10 data points for each PCA transformation within $s \in [-5,5]$ uniformly. (3) For medium interventional data, we first sample 500 random noise vectors, then intervene for each noise with 2 randomly selected PCA directions among the top 20 PCA components. Then we randomly sample 2 images from $s \in [-7,7]$. (4)  For strong interventional data, we first sample 1000 random noise vectors, then intervene on each noise vector with 2 randomly selected PCA direction among the top 20 PCA components. Then we randomly sample one image from $s \in [-9,9]$.

As shown in Figure 5 in the main paper, $P(x|z)$ and the model performance increases as the intervention is strengthened.  It demonstrates that a proper intervention strategy that increases $P(x|z)$  increases the models' performance, which also matches our Theorem \ref{th:2IV}. Moreover, in practice, the intervened GAN achieved much higher accuracy than the GAN method on both ImageNet and ObjectNet test set. 

\subsection{Optimal Intervention Strategy}
We conduct an ablation study for hyperparameters for intervention strategy, where we fix all the other variables and change only the hyperparameters we studied. Our paper finally uses the best setup as our intervention strategy.

Our intervention strategy contains three components: the truncation value for the BigGAN generator, the number of PCA direction selected, and the step size for the intervention scale. For all intervention strategies, we fix the number of generated data to be 500 for each category.
For the interventional scale, we first construct a reference scale as a unit, then vary our intervention scale with relative percentage. We visualize each PCA component to specify a reference scale for interventional range, such that after the intervention in the given direction, the image still looks realistic. The intervention range is small for the largest PCA component, and large for the non-top PCA directions. For example, we use $[-3, 3]$ for topmost PCA, and $[-12,12]$ for 40-th PCA.  As we only visual check a subset of all the categories, this range value is just a rough estimate for our reference, not the true range. We refer to the intervention scale via the relative size to the reference intervention.

The setup for our three ablation study is as follows:

\textbf{Truncation Value} We use the top 60 PCA with the 100\% interventional scale to the reference size. We vary the truncation value from 0.125 to 1. We plot the calculated $\log P(x|z)$ value.

\textbf{Top K PCA value selected.} We use truncation 1 with 100\% of the intervention range for this experiment.  We experiment with intervention on 1 PCA to top 80 PCA intervention, and plot $\log P(x|z)$ value.

\textbf{Interventional Scale s. } We use truncation 1 with top 60 PCA component for this experiment.  We vary the interventional scale based on their relative number to our predefined reference value, where the scale is selected from 10\% to 130\%. We find 100\% to our reference intervention strength yield the highest value for  $\log P(x|z)$. 

The results are visualized in Figure 6 in the main paper. We select the hyperparameters that produce the highest $P(x|z)$ for our experiments.
\section{Setup for Correlation Analysis}

\begin{figure*}
  \centering
  \includegraphics[width=1.0\textwidth]{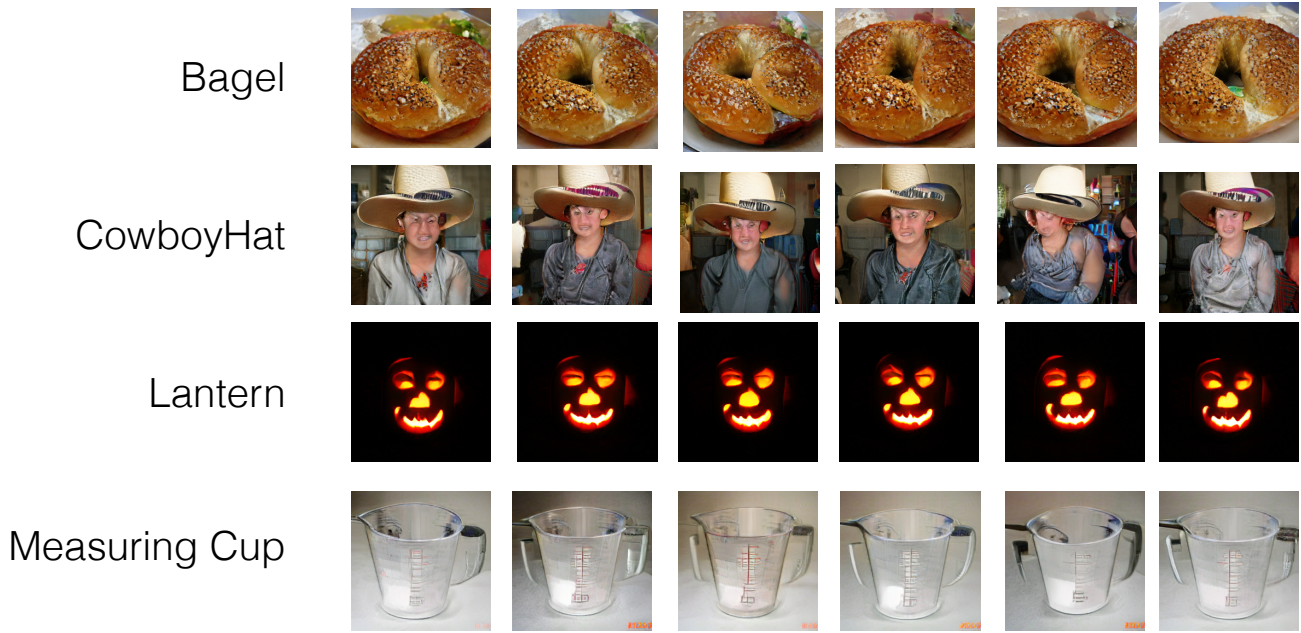}
  \vspace{-5mm}
  \caption{The exemplars generated from the BigGAN \cite{BigGAN} model with truncation 0.3. From here we intervene the image with interventions of scale up to 40 times of the truncation value. While the initial images may contain minor nuisances variations, it is negligible compared with our intervention strength.  We can thus infer the intervention value from the generated exemplars to the query images with reasonable precision (validation L1 error 0.36, while random model produces an error of 6).}
  \label{fig:exemplar}
\end{figure*}

In the main paper, we show that nuisance transformations such as viewpoints, backgrounds, and scene context automatically emerge in deep generative models. We hypothesise that there are spurious correlations between the nuisances and the object label. Given that nuisance factors are not the true cause for the label, simply predicting the label with images containing those spurious nuisances will result in non-causal models that are not robust.

We thus propose to quantify nuisances via generative models. We use the BigGAN \cite{BigGAN} \cite{BigGAN} trained on ImageNet to perform the empirical analysis. BigGAN is a conditional generative model. Given a category $\y$ and random noise vector $\h_0$, BigGAN generates an image $\x = G(\h_0,\y)$. The truncation factor in BigGAN controls the trade-off between quality and diversity. With small truncation 0.3, we find that  BigGAN not only generates highly realistic images but highly discriminative viewpoint with representative backgrounds. We show examples in Figure \ref{fig:exemplar}. We treat the generated images as the exemplar for the given category.

We then conduct interventions $\z$ to get the intervened noise vector $\h_0^*$, and the intervened image $\x^*= G(\h_0^*,\y)$, which corresponds to changing the viewpoints, backgrounds, and scene context of the exemplar. Our goal is to construct a dataset that contains images and their corresponding interventions. We aim to exhaust all the interventions $z$ on the given image, such that we can predict interventions well given an image. For each intervention $\z$, we intervene along a given PCA direction with a scale sampled randomly from a given range. We exhaust the top 60 PCA directions but remove 5 redundant transformations. This setup makes sure each PCA component correspond to some visual transformation that aligns with human perception. We focus on understanding the interpretable transformations, but adding more PCA directions produces stronger correlations. We specify the scale for each PCA direction to be within 12. For each category, we generate 10 random exemplars $h_0$. For each given $h_0$, we exhaust top 60 studied PCA directions and we sample 5 random intervention scale for each PCA direction.

We thus get data with both image $\x^*$ and the corresponding nuisance manipulation $\z$. We use 80\% of the data for training, and 20\% of the data for validation. We train a ResNet34 \cite{He_2016} model as backbone, which regresses the $z$ intervention scale value for each PCA direction given an input image $x$. We use L1 loss. Our training achieves an error of 0.009 and a validation error of 0.36, while the random guessing L1 error is around 5.6. Our trained regression model is category agnostic, thus the produced information does not contain category information. We further validate this by training an MLP model to predict the category information from the output nuisances, the model can only produce random guessing results. Thus we conclude our model can predict the nuisances well despite some error. 

We input ImageNet image to our trained nuisance predictor and output the corresponding nuisances $z$. We then train a 4 layer MLP model, with hidden units 512, to predict the ImageNet test label from the extracted nuisances factors. We find the learned MLP model can generalize well to the ImageNet test set, suggesting that nuisances factors are correlated with the label, which is shown in Figure 3 in the main paper. We also input data processed with mixup, stylize transfer, auto augmentation method, and GAN augmentation method, we then compute their corresponding correlations between the predicted nuisances and the labels. Results are shown in Figure 3 in the main paper, where we can see correlations are still there. However, after inputting our GenInt data, the correlations disappear, which suggests that our intervention removes the correlations.

\section{Experimental Setup and Ablation Study for Hyperparameters}

\begin{table*}[t]
\begin{center}
    \centering
    \small
    \begin{tabular}{ccccc|cc|cc}
         \toprule
          & Type & ImageNet & Interventional Data & Transferred Interventional & \multicolumn{2}{c|}{Std.\ Augmentation}  & \multicolumn{2}{c}{Add.\ Augmentation}\\
          & & $X$ &  $X_{int}$ &  Data $X_{itr}$ & top1   & top5  & top1 & top5\\
         \midrule  
        % \parbox[t]{1mm}{\multirow{5}{*}{\rotatebox[origin=c]{90}{ResNet18}}}
          Baseline & A & \checkmark &  &   & 20.48\% & 40.64\% & 24.42\% & 44.39\% \\
         
          Ours & B&\checkmark &  \checkmark &     & 22.07\% & \textbf{41.94\%} &  25.71\% & 46.39\%\\
          Ours &C&\checkmark &  &  \checkmark  & 22.29\% & 41.76\% & 27.02\% & 47.51\% \\
          Ours & D&\checkmark &\checkmark  & \checkmark   & \textbf{22.34\%} & 41.65\% & \textbf{27.03\%} & \textbf{48.02\%} \\
         \bottomrule
    \end{tabular}
\end{center}
\caption{Ablation study for different generative interventions in our approach. On ResNet18, we experiment on training with different combinations of interventional data. The checkmark denote the data type is used in the training. We denote the model trained with only ImageNet data as `A,' the model trained with both ImageNet data and the interventional data $X_{int}$ as `B,' the model trained with ImageNet original and Transferred interventional data $X_{itr}$ as `C,' and the model trained with all data as type `D'. We show accuracy on the ObjectNet. With standard augmentation, simply training the original data with our interventional data $X_{int}$ improves performance on the ObjectNet and achieves the highest top 5 accuracy. By training on with both $X_{int}$ and transferred interventional data $X_{itr}$, the classifier achieved the best on top 1 accuracy. For training with additional augmentation, training on our interventional data further increases performance, which demonstrates that our approach is complementary to standard data augmentation methods. Together with additional augmentation, we show combined training on both $X_{int}$ and $X_{itr}$ achieves the best performance on both top 1 and top 5 accuracy.}  
\label{tab:objectnet}
\end{table*}

\subsection{Implementation details}

We used Nvidia GeForce RTX 2080Ti GPU with 11 GB memory for experiments. As shown in Table \ref{tab:objectnet}, we denote the model trained with only ImageNet data with as `A,' the model trained with both ImageNet data and the interventional data $X_{int}$ as `B,' the model trained with original ImageNet  and Transferred interventional data $X_{itr}$ as `C,' and the model trained with all data as type `D'. We also provide code in our supplementary.

\textbf{ResNet18 training details:} For all A,B,C,D model type, we train the model with batch size 256 for ImageNet $X$. For B, we use $\lambda_1=0.05$ and batch size 64 for interventional data $X_{int}$ term. For the type C model, we train the ImageNet model with transferred interventional data $X_{itr}$ with $\lambda_2=1$, where we both use a batch size of 256. We use the same parameter setup for both the training with standard augmentation and the additional augmentation. For type D model, we directly fine-tune the pre-trained GenInt Transfer model with $\lambda_1=0.05$ for additional augmentation and $\lambda_1=0.02$ for standard augmentation, with batch size 64 and $\lambda_2=1$, batch size 256. We will provide an ablation study for the batch size and $\lambda$ hyperparameters in the next subsection, which will show that our method constantly and robustly outperforms the baseline under a wide range of hyperparameters.

\textbf{ResNet152 training details:} We choose batch size as 256 for original ImageNet data $X$ for models A,B,C. For model B and model C, we use $\lambda_1=0.2, \lambda_2=0.2$ and batch size with 64. For model D, due to the GPU memory limitation, we reduce the batch size of original ImageNet data $X$ to be 192. We also use $\lambda_1=0.2, \lambda_2=0.2$ and batch size 64 for the interventional data. We finetune from model C.

\subsection{Ablation Study for Hyper-parameters}

\begin{table}[t]
\begin{center}
    \centering
    \scriptsize
    \begin{tabular}{lc|cc}
         \toprule
          Type & batch size for $X_{int}$ &  \multicolumn{2}{c}{Std.\ Augmentation}  \\
          &  & top1   & top5 \\
         \midrule  
        % \parbox[t]{1mm}{\multirow{5}{*}{\rotatebox[origin=c]{90}{ResNet18}}}
          ImageNet baseline \cite{He_2016} & 0 & 20.48\% & 40.64\% \\
          \midrule
          GenInt (ours) & 32 & 21.89\% & 41.47\% \\
          GenInt (ours) & 64 & 22.07\% & \textbf{41.94\%} \\
          GenInt (ours) & 128 & \textbf{22.47\%} & 41.55\% \\
          GenInt (ours) & 256 & 21.92\% & 41.76\%  \\
         \bottomrule
    \end{tabular}
\end{center}
\caption{Ablation study for batch size in GenInt model. We investigate the effect of batch size when using our generative interventional data with the original data. We use $\lambda_1=0.05$ and change the batch size from 32 to 256. We observe that our method robustly outperforms the baseline under different hyperparameter setup.}  
\label{tab:objectnet-ablation-bs}
\end{table}

\begin{table}[t]
\begin{center}
    \centering
    \begin{tabular}{lc|cc}
         \toprule
          Type & $\lambda_1$ &  \multicolumn{2}{c}{Std.\ Augmentation}  \\
          &  & top1   & top5 \\
         \midrule  
        % \parbox[t]{1mm}{\multirow{5}{*}{\rotatebox[origin=c]{90}{ResNet18}}}
          ImageNet baseline \cite{He_2016} & 0 & 20.48\% & 40.64\% \\
          \midrule
          GenInt (ours) & 0.02 & 21.86\% & 41.41\% \\
          GenInt (ours) & 0.05 & \textbf{22.07\% }& \textbf{41.94\%} \\
          GenInt (ours) & 0.2 & 21.69\% & 41.59\% \\
         \bottomrule
    \end{tabular}
\end{center}
\caption{Ablation study for the value of $\lambda$ in GenInt model. We investigate the effect of $\lambda_1$ when using our generative interventional data with the original data. We use batch size 64 and vary the value of $\lambda$ from 0 to 0.2. We observe our method robustly outperforms the baseline under different hyperparameter setup while performing best on $\lambda_1=0.05$.}  
\label{tab:objectnet-ablation-lam}
\end{table}

Besides the hyperparameters for intervention strategy (discussed in Section 5.4 in our main paper), our method also needs to specify the value for $\lambda$ and the corresponding batch size for the data. We offer ablation studies for the $\lambda$ and batch size here.

On ResNet18 models, we conduct an ablation study for batch size in Table \ref{tab:objectnet-ablation-bs}, and ablation study for the weight $\lambda_1$ in Table \ref{tab:objectnet-ablation-lam}.
Table \ref{tab:objectnet-ablation-bs} varies the batch size of $X_{int}$ data from 0 to 256, fixing $\lambda_1=0.05$. Our models perform the best between batch size 64 to 128. Table \ref{tab:objectnet-ablation-bs} varies the value for $\lambda_1$ from 0 to 0.2, with fixed batch size 64. Model performs the best at $\lambda_1=0.05$.
Table \ref{tab:objectnet-ablation-bs} and Table \ref{tab:objectnet-ablation-lam} demonstrate three major results: (1) our approach constantly and robustly outperforms the baseline under different setups, which suggests that one can achieve the state-of-the-art performance without much hyperparameter tuning  (2) the performance of our method will change as the hyperparameters change (3) after grid search for hyperparameters, we demonstrate even higher performance than the one we reported in our main paper (which is a randomly chosen number without many trials), which suggests our method can be improved given more computational resources and hyperparameter tuning.

\section{Visualization}

\subsection{Model Visualization: Which regions are used by the model To make predictions}

By learning the right cause, we expect the models to learn to attend tightly to the spatial regions corresponding to the right object, and not spuriously correlated contextual regions. 
We use GradCAM to visualize all models' discriminative regions in Figure \ref{fig:gradcam-all}.
The results demonstrate that, our model not only outperforms the the-state-of-the-art ResNet152 model trained on ImageNet, but also outperforms other 4 established methods \cite{cubuk2018autoaugment, zhang2017mixup, imagenetbiased, CAS} for training robust classifiers with data augmentation. For example, for the first `Sunglasses' image, the baseline models attend to the spurious floor background and viewpoint, then mispredict the `Sunglasses' as `Toilet Tissue' and `Sandal'.  However, our method learns the right causal features, which ignores the spurious background and viewpoint information, leading to the right prediction. The same mispredictions happen for `Tie' and `Envelope' as well: given a wooden background, the state-of-the-art classifiers tend to predict based on the background and context, getting mispredictions such as `Cleaver' and `Sandal'. This suggests that, in addition to performance gains upon the established 5 baseline methods, our model predicts the right category for the right reasons. 

\begin{figure*}
% \vspace{-10mm}
  \centering
  \includegraphics[width=\textwidth]{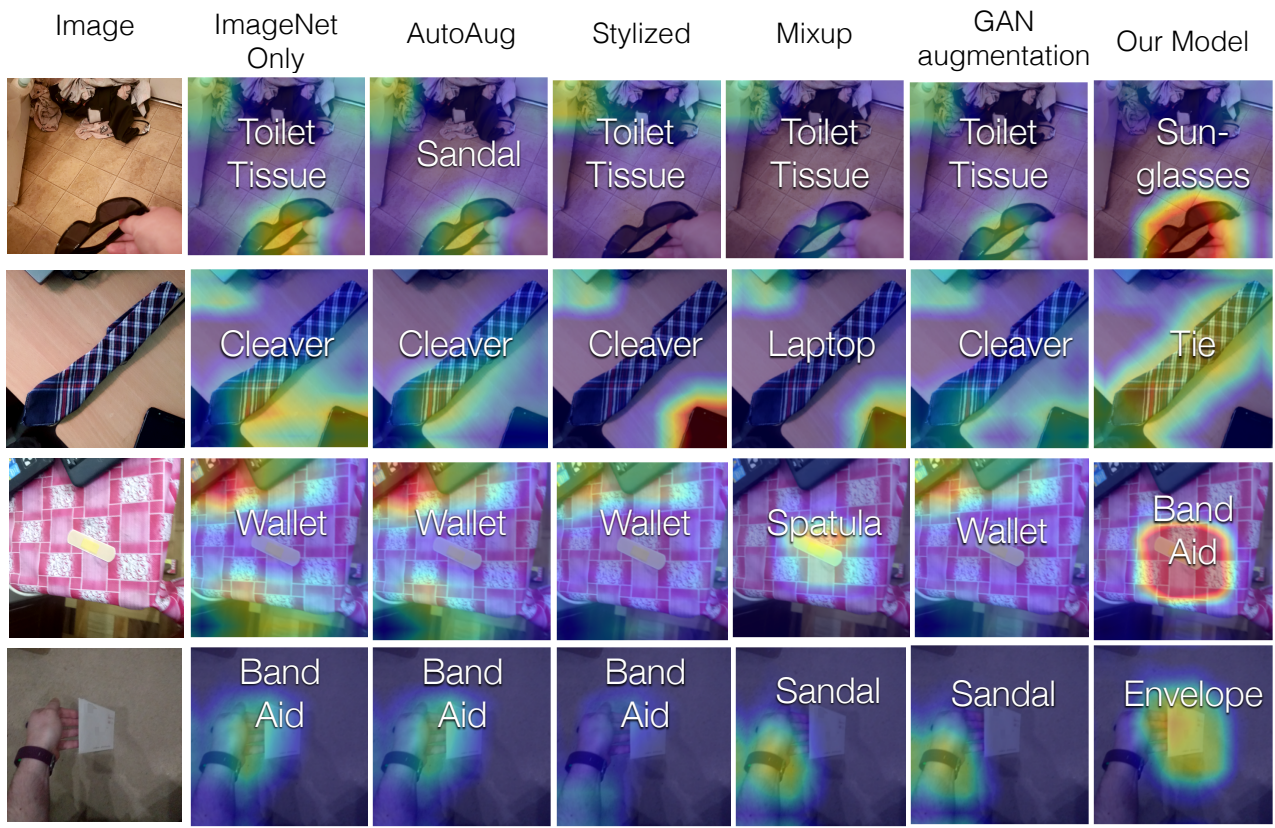}
  \caption{We visualize the input regions that are used to make predictions for 5 baseline models and our models. Blue indicates the model ignores the region for discrimination, while red indicates the region is the key for discriminative. The white text shows the model's top prediction. The baseline frequently latches onto spurious background context, resulting in wrong predictions. While the state-of-the-art data augmentation and robust learning methods cannot attend to the causal region, our model often predicts the right thing for the right reasons.}
%   \cv{This looks very nice. Can you make the images larger? Also, the spacing is not consistent.}
  \label{fig:gradcam-all}
\end{figure*}

% \sidecaptionvpos{table}{c}
% \begin{SCtable}[1][b]
% \centering
%     \centering
%     %\scriptsize
%     \begin{tabular}{l|c}
%          \toprule
%           & Causal \\
%           &  Test Accuracy\\
%          \midrule  
%         %  Chance & 10\% & 10\%  \\
%          Original Data  & 8.261\%  \\
%          IRM \cite{arjovsky2019invariant} & 18.493\%\\
%          Observational CVAE & 11.255\%\\
%          Interventional CVAE & \textbf{29.618\%}  \\
%         %  Original Data + GAN  &  \\
%         %  Original Data + Intervened GAN  & & & \\
%          \bottomrule
%     \end{tabular}
% \caption{We show 10-way classification accuracy on the Colored MNIST dataset. Color is a spurious correlation that no longer holds during the causal test.} \label{tab:colorfulmnist}
% \end{SCtable}

\subsection{Equivariance in Generative Models}

In Figure \ref{fig:equi}, we show more examples of steering the transformation of images generated using BigGAN. The major data augmentations are often restricted to a few types of traditional transformations, such as rotation and color jittering. Our intervention method can intervene on a larger number of nuisances with high-level transformations, such as stretch, which is orthogonal and complementary to existing augmentation strategies. Also, simply sampling from a generator will result in transformations that are spuriously correlated to the object category. With our generative intervention, we enable the generator to create transformations that relate less to the object category. Our interventional strategy increases the value of $P(x|z)$, which tightens the causal bound according to our theory.
Our generator sample a larger number of examples from the tail region of the distribution, which intuitively prevents the model from overfitting to certain biases.

% \vspace{-60mm}
\begin{figure*}
  \centering
  \includegraphics[width=1.0\textwidth]{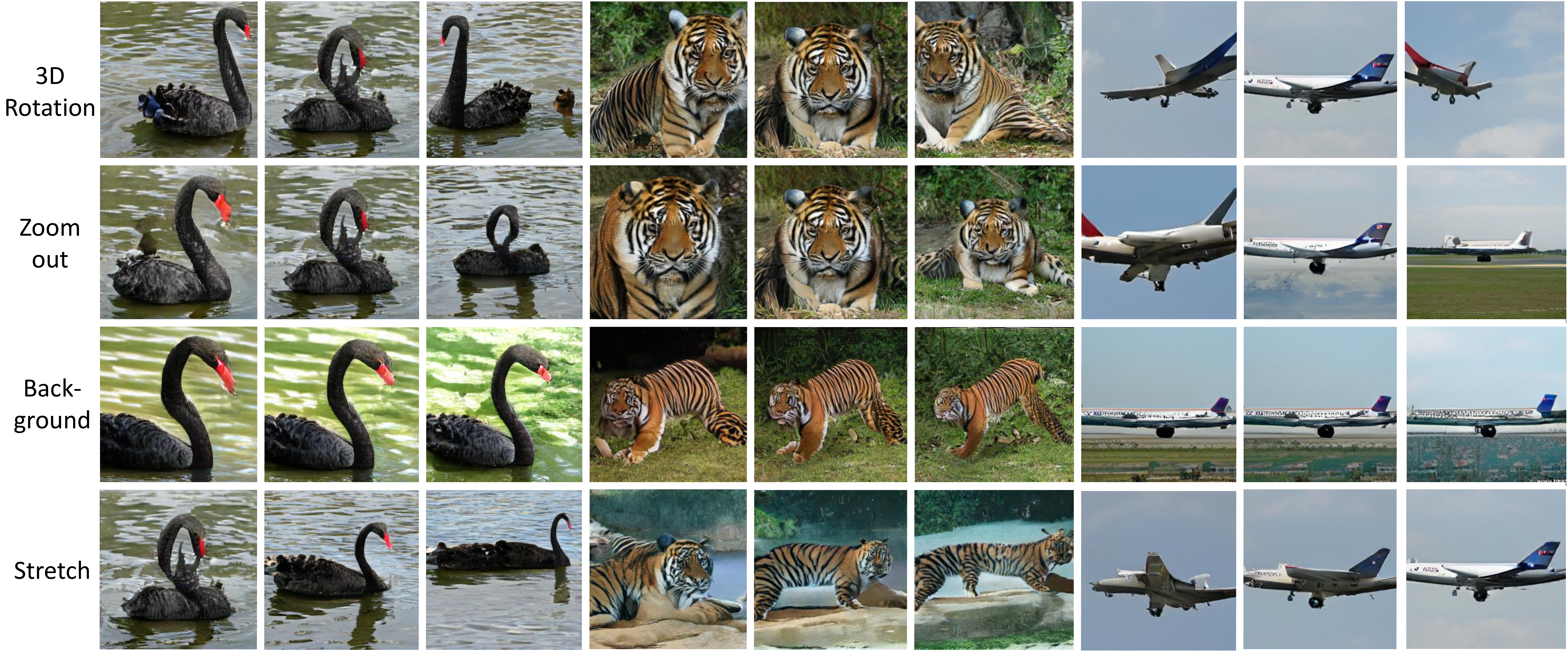}
  \vspace{-5mm}
  \caption{Generative adversarial networks are steerable, allowing us to manipulate images and approximate interventions. Since the representations are equivariant, the transformations transfer across categories. Each row in the figure presents images from the same intervention direction. }
%   \cv{This looks very nice. Can you make the images larger? Also, the spacing is not consistent.}
  \label{fig:equi}
\end{figure*}

\subsection{Transferring generative interventions}

Eliminating spurious correlations can promote causality, producing robust classifiers. We propose to intervene on not only the generative data, but also the natural data. As we discussed in the main paper, projecting natural images to the latent space in generative models is challenging, yielding inferior results. Instead, we directly transfer the desirable generative interventions to natural data.  Leveraging the neural style transfer \cite{li2017demystifying, style_transfer}, we denote our generative interventional data as the `style' images, and the natural data as the `content' images.  We use the VGG model as the backbone for style transfer, where we use the features at 1st, 2nd, 3rd, 4th, 5th convolutional layer as the style and match the feature at 4th convolutional layer as content. We weight the style loss with 1000000. We use the LBFGS optimizer with default parameter setup in Pytorch. We apply update steps with number uniformly sampled from 20 to 70. The results for transferred interventions are visualized in Figure \ref{fig:transfer}. Our method can transfer the desirable interventions, such as the background and texture, to the target image, eliminating spurious correlations in the original natural data.

\begin{figure*}
% \vspace{-10mm}
  \centering
  \includegraphics[width=0.75\textwidth]{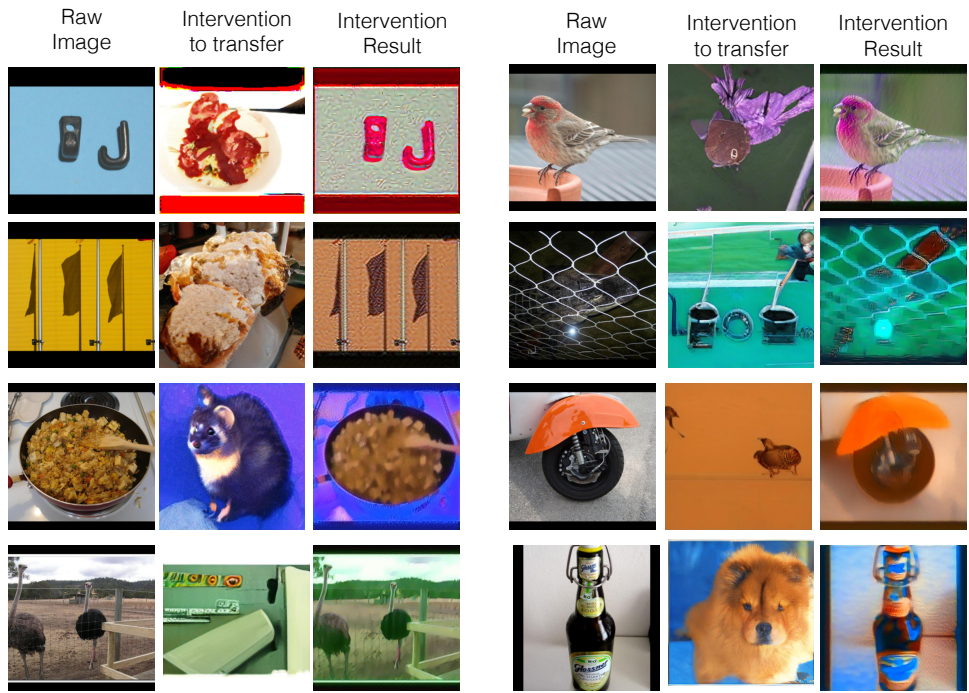}
  \caption{How to conduct interventions for natural images without knowing the corresponding latent code in a generator? We transfer the desirable intervention to our target images via the neural style transfer algorithm. The `Raw Image' column shows the images that we want to intervene on. The `Intervention to Transfer' column shows the result of applying our generative intervention to a random image generated by BigGAN, since intervention cannot exist itself, it needs to be realized on one image. For example, the first example has an intervention to whiten the background, where we realize it on a bagel image. Our goal is to apply this intervention to the natural image in the first column. The `Intervention Result' column shows the outcome of our transferred intervention, where our `Raw Image' is intervened with the intervention in the `Intervention to transfer' column. For the first image, we can see the background for the hook is whitened. We can thus perform interventions on the natural data where no latent code is found in the generative models. While the interventions here are mostly restricted to the background and scene context changes without viewpoint rotation, it can remove spurious correlations from the raw images to enforce a tighter causal bound. }
%   \cv{This looks very nice. Can you make the images larger? Also, the spacing is not consistent.}
  \label{fig:transfer}
\end{figure*}

\clearpage
% In the unusual situation where you want a paper to appear in the
% references without citing it in the main text, use \nocite
%\nocite{langley00}
%\newpage

\bibliography{reference}
\bibliographystyle{plain}

%%%%%%%%%%%%%%%%%%%%%%%%%%%%%%%%%%%%%%%%%%%%%%%%%%%%%%%%%%%%%%%%%%%%%%%%%%%%%%%
%%%%%%%%%%%%%%%%%%%%%%%%%%%%%%%%%%%%%%%%%%%%%%%%%%%%%%%%%%%%%%%%%%%%%%%%%%%%%%%
% DELETE THIS PART. DO NOT PLACE CONTENT AFTER THE REFERENCES!
%%%%%%%%%%%%%%%%%%%%%%%%%%%%%%%%%%%%%%%%%%%%%%%%%%%%%%%%%%%%%%%%%%%%%%%%%%%%%%%
%%%%%%%%%%%%%%%%%%%%%%%%%%%%%%%%%%%%%%%%%%%%%%%%%%%%%%%%%%%%%%%%%%%%%%%%%%%%%%%

%%%%%%%%%%%%%%%%%%%%%%%%%%%%%%%%%%%%%%%%%%%%%%%%%%%%%%%%%%%%%%%%%%%%%%%%%%%%%%%
%%%%%%%%%%%%%%%%%%%%%%%%%%%%%%%%%%%%%%%%%%%%%%%%%%%%%%%%%%%%%%%%%%%%%%%%%%%%%%%

\end{subappendices}

\end{document}